
\documentclass[letterpaper]{article}
\usepackage{uai2019}
\usepackage[margin=1in]{geometry}

\usepackage{times}
\usepackage{subcaption}
\usepackage{graphicx}

\usepackage{float}
\usepackage{amsmath,amsthm,amssymb}
\usepackage{amsfonts}
\usepackage{natbib}
\usepackage{dsfont}
\usepackage{algorithm}
\usepackage{algorithmic}
\usepackage[dvipsnames]{xcolor}
\usepackage[]{color-edits}
\usepackage{xspace}
\usepackage{enumitem}

\addauthor{AA}{Blue}
\addauthor{AS}{Red}
\addauthor{EB}{OliveGreen}
\addauthor{YL}{Cerulean}

\newcommand{\wmax}{\ensuremath{\sigma_w}}
\newcommand{\gmax}{\ensuremath{G_{\max}}}
\newcommand{\vmax}{\ensuremath{V_{\max}}}
\newcommand{\numin}{\ensuremath{\nu_{\min}}}

\newtheorem{theorem}{Theorem}

\newtheorem{assumption}{Assumption}
\newtheorem{definition}{Definition}
\newcommand{\goffpac}{\ensuremath{g_{\text{OffPAC}}}}
\newcommand{\offpac}{Off-PAC\xspace}
\newcommand{\R}{\ensuremath{\mathbb{R}}}
\newcommand{\E}{\ensuremath{\mathbb{E}}}
\newcommand{\inner}[2]{\ensuremath{\langle #1, #2 \rangle}}
\newcommand{\norm}[1]{\ensuremath{\left\| #1 \right\|}}
\newcommand{\grad}{\ensuremath{\zeta}}
\newcommand{\wh}{\ensuremath{\hat{w}}}
\newcommand{\Qh}{\ensuremath{\widehat{Q}}}
\newcommand{\eps}{\ensuremath{\varepsilon}}
\newcommand{\alg}{OPPOSD\xspace}

\usepackage{xr-hyper}
\usepackage{hyperref}
\usepackage{xr}

\makeatletter
\newcommand*{\addFileDependency}[1]{
  \typeout{(#1)}
  \@addtofilelist{#1}
  \IfFileExists{#1}{}{\typeout{No file #1.}}
}
\makeatother


\title{Off-Policy Policy Gradient with State Distribution Correction}


\author{
Yao Liu \\
Stanford University\\
\texttt{yaoliu@stanford.edu} \\
\And
Adith Swaminathan \\
Microsoft Research \\
\texttt{adswamin@microsoft.com} \\
\AND
Alekh Agarwal \\
Microsoft Research \\
\texttt{alekha@microsoft.com} \\
\And
Emma Brunskill \\
Stanford University\\
\texttt{ebrun@cs.stanford.edu}
}

\begin{document}

\maketitle

\begin{abstract}
We study the problem of off-policy
policy optimization in Markov decision processes, and develop a novel off-policy policy gradient method. 
Prior off-policy policy gradient approaches have generally ignored the mismatch between the distribution of
states visited under the behavior policy used to collect data, and what would
be the distribution of states under the learned policy. 
Here we build on recent
progress for estimating the ratio of the state distributions under behavior and evaluation policies for 
policy evaluation, and present an off-policy policy gradient optimization technique
that can account for this mismatch in distributions. We present an illustrative example
of why this is important and a theoretical convergence guarantee for our approach. Empirically, we compare our method in simulations to several strong baselines which do not correct for this mismatch, significantly improving in the quality of the policy discovered.
\end{abstract}

\section{INTRODUCTION}
\label{sec:intro}


The ability to use data about prior decisions and their outcomes to make counterfactual inferences about how alternative decision policies might perform, is a cornerstone of intelligent behavior. It also has immense practical potential -- it can enable the use of electronic medical record data to infer better treatment decisions for patients, the use of prior product recommendations to inform more effective strategies for presenting recommendations, and previously collected data from students using educational software to better teach those and future students. Such counterfactual reasoning, particularly when one is deriving decision policies that will be used to make not one but a sequence of decisions, is important since online sampling during a learning procedure is both costly and dangerous, and not practical in many of the applications above. While amply motivated, doing such counterfactual reasoning is also challenging because the data is censored -- we can only observe the result of providing a particular chemotherapy treatment policy to a particular patient, not the counterfactual of if we were then to start with a radiation sequence.


We focus on the problem of performing such counterfactual inferences in the context of sequential decision making in a Markov decision process (MDP). We assume that data has been previously collected using some fixed and known behavior policy, and our goal is to learn a new decision policy with good performance for future use. This problem is often known as batch off-policy policy optimization. We assume that the behavior policy used
to gather the data is stochastic: if it is deterministic, without any additional assumptions, we will not be able to estimate the
performance of any other policy.

In this paper we consider how to perform batch off-policy policy optimization (OPPO) using a
policy gradient method. While there has been increasing interest in batch off-policy reinforcement
learning (RL) over the last few years ~\citep{Thomas2015,Jiang2016,Thomas2016}, much
of this has focused on off-policy policy evaluation, where the goal is to estimate the performance
of a particular given target decision policy. Ultimately we will very frequently be interested in
the optimization question, which requires us to determine a good new policy for future potential
deployment, given a fixed batch of prior data.

To do batch off-policy policy optimization, value function methods (like Deep Q-Network~\citep{DQN}
or Fitted Q-Iteration~\citep{ernst2005tree}) can be used alone, but there are many cases where we might prefer
to focus on policy gradient or actor-critic methods.  Policy gradient methods have seen substantial success
in the last few years~\citep{schulman2015trust} in the on-policy setting, and they can be
particularly appealing for cases where it is easier to encode inductive bias in the policy space,
or when the actions are continuous (see e.g.~\citet{abbeel2016tutorial} for more discussion). 
However, existing approaches incorporating offline information into online policy gradients~\citep{gu46349,gu45838,Metelli2018Policy} have shown limited benefit,
in part due to the variance in gradients incurred due to incorporating off-policy data. One approach is to
correct exactly for the difference between the sampling data distribution and the target
policy data distribution, by using importance sampling to re-weight every sample according to the likelihood ratio of behavior policy and evaluation policy up to that step. Unfortunately the variance of this importance sampling ratio will grow exponentially with the problem horizon.

To avoid introducing variance in gradients, off-policy actor critic (Off-PAC)~\citep{degris2012off} ignores the state distribution difference between the behavior and target policies, and instead only uses a one step of importance sampling to reweight the action distributions.  Many practical off-policy policy optimization algorithms including DDPG \citep{silver2014deterministic}, ACER \citep{wang2016sample}, and Off-PAC with emphatic weightings~\citep{imani2018off} are based on the gradient expression in the Off-PAC algorithm \citep{degris2012off}. However as we will demonstrate, not correcting for this mismatch in state distributions can result in poor performance in general, both in theory and empirically. 

Instead, here we introduce an off-policy policy gradient algorithm that can be used with batch data
and that accounts for the difference in the state distributions between the current target and
behavior policies during each gradient step. Our approach builds on recent approaches for policy evaluation that avoid the exponential blow up in importance sampling
weights by instead directly computing the ratio of the distributions of state visitation
under the target and behavior policies~\citep{hallak2017consistent,liu2018breaking,gelada2019off}. 
We incorporate these ideas within an off-policy actor critic
method to do batch policy optimization. We first provide an illustrative example to demonstrate the benefit of this
approach over Off-PAC~\citep{degris2012off}, and show that correcting for the mismatch in
state distributions of the behavior and target policies can be critical for
getting good estimates of the policy gradient, and we also provide convergence guarantees
for our algorithm under certain assumptions. We then compare our approach and Off-PAC
experimentally on two simulated
domains, cart pole and a HIV patient simulator~\citep{ernst2005tree}. Our results show
that our approach is able to learn a substantially higher performing policy than
both Off-PAC and the behavior policy that is used to gather the batch data. 
While they have not been previously evaluated in the batch setting, we also compare our method against two more recent policy optimization approaches which also make use of available off-policy data. We find that without the several enhancements on top of vanilla actor-critic method which these methods incorporate, we are able to outperform them in our simulations by virtue of the state distribution correction. We further
demonstrate that we can use the recently proposed off-policy evaluation technique of~\citet{liu2018breaking} to reliably identify good policies
found during the policy gradient optimization run. Our results suggest that
directly accounting for the state distribution mismatch can be done without
prohibitively increasing the variance during policy gradient evaluations, and that doing so can yield significantly
better policies. These results are promising for enabling us to learn better
policies given batch data or improving the sample efficiency of online policy
gradient methods by being able to better incorporate past data.

\paragraph{Related Work} 
Many prior works focus on the off-policy policy evaluation (OPPE) problem, as it is a foundation for downstream policy learning problems. These approaches often build on importance sampling techniques to correct for distribution mismatch in the trajectory space, pioneered by the early work on eligibility traces~\citep{precup2000eligibility}, and further enhanced with a variety of variance reduction techniques ~\citep{Thomas2015, Jiang2016, Thomas2016}. Some consider model-based approaches to OPPE~\citep{farajtabar2018more,liu2018representation}, which usually perform better than importance sampling approaches empirically in policy evaluation settings. But those methods do not extend easily to our OPPO setting, as well as introduce additional challenges due to bias in the models and typically require fitting a separate model for each target policy. The recent work of~\citet{liu2018breaking} partially alleviates the variance problem for model-free OPPE by reweighting the state visitation distributions, which can result in as just as high a variance in the worst case, but is often much smaller. Our work incorporates this recent estimator in policy optimization methods to enable learning from off-policy collected data.

In the off-policy policy optimization setting, many works study value-function based approaches (like fitted Q iteration~\citep{ernst2005tree} and DQN~\citep{DQN}), as they are known to be more robust to distribution mismatch. Some recent works aim to further incorporate reweighting techniques within off-policy value function learning~\citep{hallak2017consistent,gelada2019off}. These methods hint at the intriguing potential of value-function based techniques for off-policy learning, and we are interested in similarly understanding the viability of using direct policy optimization techniques in the off-policy setting.

Off-policy actor critic method \citep{degris2012off, imani2018off} propose a partial answer to this question by learning the critic using off-policy data and reweighting actor gradients by correcting the conditional action probabilities, but ignore the mismatch between the state distributions. A different research thread on trust region policy optimization method \citep{schulman2015trust}, while requiring the on-policy setting, incorporates robustness to the mismatch between the data collection and gradient evaluation policies. However this is not a fully off-policy scenario and learning from an offline dataset is still strongly motivated by many applications. Somewhat related, many recent off-policy RL algorithms \citep{silver2014deterministic, wang2016sample, gu45838, gu46349, lillicrap2015continuous, Haarnoja2018Soft, Dai2018SBEED} improve the empirical sample efficiency by using more off-policy samples from the previous iterations. In contrast to our setting with a fixed dataset collected with a behavior policy, these methods are online off-policy in that they still collect data with each policy found in the optimization procedure, but also use previous data for added sample efficiency. We nevertheless compare to two such methods in our empirical evaluation.

\section{PRELIMINARIES}
\label{sec:prelim}

We consider finite horizon MDPs $M = \langle\mathcal{S}, \mathcal{A}, P, r, \gamma\rangle$, with a continuous state space $\mathcal{S}$, a discrete action space $\mathcal{A}$, a transition probability distribution $P: \mathcal{S} \times \mathcal{A} \times \mathcal{S} \mapsto [0,1]$ and an expected reward function $r: \mathcal{S} \times \mathcal{A} \mapsto [0,1]$. We observe tuples of state, action, reward and next state: $(s_t, a_t, r_t, s_{t+1})$, where $s_0$ is drawn from a initial state distribution $p_0(s)$, action $a$ is drawn from a stochastic behavior policy $\mu(a|s)$ and the reward and next state are generated by the MDP. Given a discount factor $\gamma \in (0,1]$, the goal is to maximize the expected return of policy:
\begin{align}
  R^{\pi}_M = \mathbb{E}_{\pi} \left[ \lim_{T\to\infty} \frac{1}{\sum_{t=0}^T \gamma^t} \sum_{t=0}^{T} \gamma^t r_t \right]
\end{align}
When $\gamma = 1$ this becomes the average reward case and $\gamma < 1$ is called the discounted reward case. In this paper, we focus on $\gamma < 1$ for most of the results. Given any fixed policy $\pi$ the MDP becomes a Markov chain and we can define the state distribution at time step $t$: $d^{\pi}_{t}(s)$, and the stationary state distribution across time:
$
    d^{\pi}(s) = \displaystyle\lim_{T\to\infty}\textstyle\frac{1}{\sum_{t=0}^T \gamma^t} \sum_{t=0}^{T} \gamma^t d^{\pi}_{t}(s)
$
We consider infinite horizon for the convenience of defining stationary distribution. A similar scheme should work for finite horizon, but requires time-dependent state distribution and we omit this here. To make sure the optimal policy is learnable from collected data, we assume the following about the support set of behavior policy:
\begin{assumption}
\label{assum:coverage}
For at least one optimal policy $\pi^*$, $d^{\mu}(s) > 0$ for all $s$ such that $d^{\pi^*}(s) > 0$, and $\mu(a|s) > 0$ for all $a$ such that $\pi^*(a|s) > 0$ when $d^{\pi^*}(s) > 0$.
\end{assumption}

\section{AN OFF-POLICY POLICY GRADIENT ESTIMATOR}
Note that Assumption~\ref{assum:coverage} is quite weak when designing a policy evaluation or optimization scheme, since it only guarantees that $\mu$ adequately visits all the states and actions visited by some $\pi^*$. However, a policy optimization algorithm might require off-policy policy gradient estimates at arbitrary intermediate policy it produces along the way, which might visit states not reached by $\mu$. A strong assumption to handle such scenarios is that Assumption~\ref{assum:coverage} holds not just for some $\pi^*$, but any possible policy $\pi$. Instead of making such a strong assumption, we start by defining an augmented MDP where Assumption~\ref{assum:coverage} suffices for obtaining pessimistic estimates of policy values and gradients.

\paragraph{Constructing an Augmented MDP} 
Given a data collection policy $\mu$, let its support set be $\mathcal{S}_{\mu} = \{ s: d^{\mu}(s) > 0\}$ and $\mathcal{SA}_{\mu} = \{(s,a): d^{\mu}(s)\mu(a|s) > 0\}$. Consider a modified MDP $M_{\mu} = \langle\mathcal{S}_{\mu}\bigcup\{s_{abs}\}, \mathcal{A}, P_{\mu}, r_{\mu}, \gamma\rangle$. Any state-action pairs not in $\mathcal{SA}_{\mu}$ will essentially transition to $s_{abs}$ which is a new absorbing state where all actions will lead to a zero reward self-loop. Concretely, $P_{\mu}(s_{abs}|s_{abs},a) = 1$ and $r(s_{abs},a) = 0$ for any $a$. For all other states, the transition probabilities and rewards are defined as:  For $(s,a) \in \mathcal{SA}_{\mu}$, $P_{\mu}(s'|s,a) = P(s'|s,a)$ for all $s' \in \mathcal{S}_{\mu}$, and $P_{\mu}(s_{abs}|s,a) = \int_{s \not\in \mathcal{S}_{\mu}}P(s'|s,a)ds'$. For all $s \in \mathcal{S}_{\mu}$ but $(s,a) \not\in \mathcal{SA}_{\mu}$, $P_{\mu}(s_{abs}|s,a)=1$. $r_{\mu}(s,a) = r(s,a)$ for $(s,a) \in \mathcal{SA}_{\mu}$, and $r_{\mu}(s,a) = 0$ otherwise. First we prove that the optimal policy $\pi^*$ of the original MDP remains optimal in augmented MDP as a consequence of Assumption~\ref{assum:coverage}. 

\begin{theorem}
\label{thm:augmented_mdp}
The expected return of all policies $\pi$ in the original MDP is larger than the expected return in the new MDP: $R^{\pi}_{M} \ge R^{\pi}_{M_{\mu}}$. For any optimal $\pi^*$ that satisfies Assumption \ref{assum:coverage} we have that $R^{\pi^*}_{M} = R^{\pi^*}_{M_{\mu}}$
\end{theorem}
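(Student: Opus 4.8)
The plan is to prove both claims by a pathwise (coupling) argument that compares trajectories generated by running $\pi$ in $M$ and in $M_{\mu}$ under shared randomness, exploiting that rewards are non-negative (they lie in $[0,1]$) and that the only difference between the two MDPs is the redirection of certain transitions to the zero-reward absorbing state $s_{abs}$.

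For the inequality $R^{\pi}_{M}\ge R^{\pi}_{M_{\mu}}$, first I would couple the two processes as follows. Starting from a common $s_0\sim p_0$, at each step feed both chains the same action $a_t\sim\pi(\cdot|s_t)$ and the same transition randomness, so that as long as the augmented chain has not been absorbed and $(s_t,a_t)\in\mathcal{SA}_{\mu}$, the next states coincide; whenever the original chain would transition to some $s_{t+1}\notin\mathcal{S}_{\mu}$, or whenever $(s_t,a_t)\notin\mathcal{SA}_{\mu}$, the augmented chain instead moves to $s_{abs}$ and stays there. This coupling is consistent because by construction $P_{\mu}(s_{abs}|s,a)=\int_{s'\notin\mathcal{S}_{\mu}}P(s'|s,a)\,ds'$ on $\mathcal{SA}_{\mu}$ and $P_{\mu}(s_{abs}|s,a)=1$ otherwise, so the marginal next-state laws are preserved in both chains. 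Letting $\tau$ be the first absorption time, the two trajectories agree for all $t<\tau$, while for $t\ge\tau$ the augmented reward is $r_{\mu,t}=0\le r_t$. Hence pathwise $\sum_{t=0}^{T}\gamma^t r_t\ge\sum_{t=0}^{T}\gamma^t r_{\mu,t}$ for every $T$; dividing by the common normalizer $\sum_{t=0}^{T}\gamma^t$, sending $T\to\infty$ (the limit exists for $\gamma<1$ by monotone/dominated convergence since rewards lie in $[0,1]$), and taking expectations over the coupling yields $R^{\pi}_{M}\ge R^{\pi}_{M_{\mu}}$.

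For the equality at an optimal $\pi^*$ satisfying Assumption~\ref{assum:coverage}, I would show that running $\pi^*$ in $M$ never leaves $\mathcal{SA}_{\mu}$, so the coupling above never triggers absorption and the two returns are identical. The key observation is that for $\gamma>0$ all the weights $\gamma^t$ are positive, so $d^{\pi^*}(s)>0$ exactly when $d^{\pi^*}_t(s)>0$ for some $t$; equivalently $\mathrm{supp}(d^{\pi^*})=\bigcup_t\mathrm{supp}(d^{\pi^*}_t)$. By the first part of Assumption~\ref{assum:coverage}, $\mathrm{supp}(d^{\pi^*})\subseteq\mathcal{S}_{\mu}$, and thus $\Pr[s_t\notin\mathcal{S}_{\mu}]=\int_{s\notin\mathcal{S}_{\mu}}d^{\pi^*}_t(s)\,ds=0$ for every $t$. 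The second part of the assumption then guarantees that at any visited state $s_t\in\mathcal{S}_{\mu}$ (for which $d^{\pi^*}(s_t)>0$), every action with $\pi^*(a|s_t)>0$ has $\mu(a|s_t)>0$, i.e.\ $(s_t,a_t)\in\mathcal{SA}_{\mu}$ almost surely. An induction on $t$ therefore shows that, with probability one, the $\pi^*$-trajectory stays inside $\mathcal{SA}_{\mu}$, where $P_{\mu}$ and $r_{\mu}$ coincide with $P$ and $r$; consequently the trajectory distributions under $M$ and $M_{\mu}$ are identical and $R^{\pi^*}_{M}=R^{\pi^*}_{M_{\mu}}$.

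The main obstacle I anticipate is making the coupling fully rigorous in the continuous-state setting: one must verify that the event $\{s_{t+1}\notin\mathcal{S}_{\mu}\}$ in $M$ can be identified measurably with absorption in $M_{\mu}$ so that the marginal transition laws are exactly preserved, and that the normalization and the $T\to\infty$ limit interact correctly with the expectation (for which boundedness of rewards plus dominated convergence suffice when $\gamma<1$). The support argument in the second part also implicitly relies on every $d^{\pi^*}_t$ being supported within $\mathrm{supp}(d^{\pi^*})$; this is immediate from the positivity of the discount weights but should be stated carefully to justify the ``almost surely'' claims that drive the equality.
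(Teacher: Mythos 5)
Your proof is correct and takes essentially the same route as the paper's: the paper likewise compares returns trajectory-by-trajectory (implicitly coupling runs of $M$ and $M_{\mu}$ under shared randomness), uses non-negativity of rewards together with the zero-reward absorption to get $R^{\pi}_{M} \ge R^{\pi}_{M_{\mu}}$, and invokes Assumption~\ref{assum:coverage} to conclude that trajectories of $\pi^*$ never leave $\mathcal{SA}_{\mu}$, so the returns coincide. Your write-up simply makes the coupling and the support argument explicit (the latter being valid for $\gamma<1$, the discounted case the paper focuses on), which is a rigor improvement rather than a different method.
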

That is, policy optimization in $M_\mu$ has at least one optimal solution identical to the original MDP $M$ with the same policy value since $M_\mu$ lower bounds the policy value in $M$, so sub-optimal policies remain sub-optimal. Proof in Appendix \ref{sec:proof_thm1_appendix}.

\paragraph{Off-Policy Policy Gradient in Augmented MDP}
We will now use the expected return in the modified MDP, $R^{\pi}_{M_{\mu}}$, as a surrogate for deriving policy gradients. According to the policy gradient theorem in \citet{sutton2000policy}, for a parametric policy $\pi$ with parameters $\theta$:
\begin{equation*}
    \frac{\partial R^{\pi}_{M_{\mu}}}{\partial \theta} = \mathbb{E}_{d^\pi} \left[  \int_{a} \pi(a|s) \frac{\partial \log \pi(a|s)}{\partial \theta}Q^{\pi}_{M_{\mu}}(s,a) da \right],
\end{equation*}

From here on, $d^{\pi}(s)$ is with respect to the new MDP. The definition of $Q^\pi_{M_\mu}$ follows \citet{sutton2000policy}. \footnote{For discounted case, our definition of expected return differs from the definition of \citet{sutton2000policy} by a normalization factor $(1-\gamma)$. This is because the definitions of stationary distributions are scaled differently in the two cases.}

Now we will show that we can get an unbiased estimator of this gradient using importance sampling from the stationary state distribution $d^{\mu}(s)$ and the action distribution $\mu(a|s)$. According to the definition of $M_{\mu}$, we have that for all ${s,a}$ such that $d^{\mu}(s)\mu(a|s) = 0$, $(s,a)$ is not in $\mathcal{SA}_{\mu}$. Hence $Q^{\pi}_{M_{\mu}}(s,a) = 0$ for any policy $\pi$ since $(s,a)$ will receive zero reward and lead to a zero reward self-loop. So we have:
\begin{align}
    & \frac{\partial R^{\pi}_{M_{\mu}}}{\partial \theta} = \sum_{s} d^{\pi}(s) \sum_{a} \pi(a|s) \frac{\partial \log \pi(a|s)}{\partial \theta}Q^{\pi}_{M_{\mu}}(s,a) \nonumber\\
    &= \sum_{s~:~d^{\mu}(s)>0} \frac{d^{\pi}(s)}{d^{\mu}(s)}d^{\mu}(s)  \nonumber \\
    &\sum_{a~:~\mu(a|s)>0} \frac{\pi(a|s)}{\mu(a|s)}\mu(a|s) \frac{\partial \log \pi(a|s)}{\partial \theta}Q^{\pi}_{M_{\mu}}(s,a) \nonumber\\
    &= \mathbb{E}_{d^\mu,\mu} \left[ \frac{d^{\pi}(s)}{d^{\mu}(s)}\frac{\pi(a|s)}{\mu(a|s)} \frac{\partial \log \pi(a|s)}{\partial \theta}Q^{\pi}_{M_{\mu}}(s,a) \right] \label{eqn:our_policy_gradient}
\end{align}

Note that according to the definition of $M_{\mu}$, the Markov chain induced by $M$ and $\mu$ is exactly the same as $M_{\mu}$ and $\mu$. Thus the distribution of $(s_t,a_t,s_{t+1})$ generated by executing $\mu$ in $M$ is the same as executing $\mu$ in $M_{\mu}$. So, we can estimate this policy gradient using the data we collected from $\mu$ in $M$. We conclude the section by pointing out that working in the augmented MDP allows us to construct a reasonable off-policy policy gradient estimator under the mild Assumption~\ref{assum:coverage}, while all prior works in this vein either explicitly or implicitly require the coverage of all possible policies.

Note that in the average reward case, such an augmented MDP would not be helpful for policy optimization since all policies that potentially reach $s_{abs}$ will have a value of zero, and the stationary state distribution will be a single mass in the absorbing state. That would not induce a practical policy optimization algorithm. In the average reward case, either we need a stronger assumption that $\mu$ covers the entire state-action space or we must approximate the problem by setting a discount factor $\gamma < 1$ for the policy optimization algorithm, which is a common approach for deriving practical algorithms in an average reward (episodic) environment.

\section{ALGORITHM: \alg}
\label{sec:opposd}

Given the off-policy policy gradient derived in \eqref{eqn:our_policy_gradient}, how can we efficiently estimate it from samples collected from $\mu$? Notice that most quantities in the gradient estimator~\eqref{eqn:our_policy_gradient} are quite intuitive and also present in prior works such as \offpac. The main difference is the state distribution reweighting $d^\pi(s)/d^\mu(s)$, which we would like to estimate using samples collected with $\mu$. For estimating this ratio of state distributions, we build on the recent work of~\citet{liu2018breaking} which we describe next.

For a policy $\pi$, let us define the shorthand $\rho_\pi(s,a) = \pi(s,a)/\mu(s,a)$. Further given a function $w~:~\mathcal{S}\to \mathbb{R}$, define $\Delta(w;s,a,s') := w(s) \rho_\pi(s,a) - w(s')$. Then we have the following result.

\begin{theorem}[\citep{liu2018breaking}]
Given any $\gamma \in (0,1)$, assume that $d^\mu(s) > 0$ for all $s$ and define
\begin{align*}
L(w,f) &= \gamma \E_{(s,a,s')\sim d^\mu}[\Delta(w;s,a,s')f(s')] \\ & + (1-\gamma)\E_{s\sim p_0}[(1-w(s))f(s)].
\end{align*}
Then $w(s) = d^\pi(s)/d^\mu(s)$ if and only if $L(w,f) = 0$ for any measurable test function $f$.\footnote{When $\gamma = 1$, $w$ is only determined up to normalization, and hence an additional constraint $\E_{s\sim d^\mu}[w(s)] = 1$ is required to obtain the conclusion $w(s) = d^\pi(s)/d^\mu(s)$.}
\label{thm:w-def}
\end{theorem}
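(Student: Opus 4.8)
The plan is to reduce the bilinear form $L(w,f)$ to a single linear functional of $f$ whose kernel is exactly the Bellman flow fixed point, and then invoke uniqueness of that fixed point. The starting point is the \emph{discounted flow equation} satisfied by the stationary distribution of any policy. Unrolling the definition $d^\pi(s) = (1-\gamma)\sum_{t\ge 0}\gamma^t d^\pi_t(s)$ (valid since $\sum_{t\ge0}\gamma^t = (1-\gamma)^{-1}$ for $\gamma<1$), peeling off the $t=0$ term $d^\pi_0 = p_0$, and using $d^\pi_t(s') = \int\!\int d^\pi_{t-1}(s)\pi(a|s)P(s'|s,a)\,da\,ds$ for $t\ge1$, I obtain
\begin{equation*}
d^\pi(s') = (1-\gamma)p_0(s') + \gamma \int\!\!\int d^\pi(s)\pi(a|s)P(s'|s,a)\,da\,ds,
\end{equation*}
and the identical identity with $\mu$ in place of $\pi$. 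I establish this first, since it is used twice. Writing $T^\pi$ for the transition operator $(T^\pi\nu)(s') = \int\!\int \nu(s)\pi(a|s)P(s'|s,a)\,da\,ds$ on (signed) state measures, the flow equation states that $d^\pi$ is a fixed point of $\nu \mapsto (1-\gamma)p_0 + \gamma T^\pi\nu$.

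Next I would expand $L(w,f)$ after introducing the candidate measure $\tilde{d}(s) := w(s)d^\mu(s)$. In the first expectation the action reweighting collapses, since $w(s)\rho_\pi(s,a)\,d^\mu(s)\mu(a|s) = \tilde{d}(s)\pi(a|s)$, so that term becomes $\gamma\int f(s')(T^\pi\tilde{d})(s')\,ds'$. The term $-\gamma\,\E_{(s,a,s')\sim d^\mu}[w(s')f(s')]$ is the crux: the $s'$-marginal is $w(s')(T^\mu d^\mu)(s')$, and here I substitute the flow equation for $\mu$, namely $T^\mu d^\mu = \gamma^{-1}\big(d^\mu - (1-\gamma)p_0\big)$, turning this term into $-\int\tilde{d}(s')f(s')\,ds' + (1-\gamma)\int w(s')p_0(s')f(s')\,ds'$. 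Its second piece cancels exactly the $-w$ contribution of the initial-state term $(1-\gamma)\E_{p_0}[(1-w)f]$.

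After this cancellation the form collapses to
\begin{equation*}
L(w,f) = \int f(s')\Big[(1-\gamma)p_0(s') + \gamma(T^\pi\tilde{d})(s') - \tilde{d}(s')\Big]\,ds'.
\end{equation*}
Both directions now follow. If $w = d^\pi/d^\mu$ then $\tilde{d} = d^\pi$, and the bracket vanishes by the flow equation, so $L(w,f)=0$ for every $f$. Conversely, if $L(w,f)=0$ for all measurable $f$, then choosing $f$ to be the sign of the bracket (or indicators of measurable sets) forces the bracket to vanish almost everywhere, i.e. $\tilde{d} = (1-\gamma)p_0 + \gamma T^\pi\tilde{d}$.

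The main obstacle is the final uniqueness step: I must argue that this fixed-point equation admits $d^\pi$ as its \emph{only} solution, so that $\tilde{d}=d^\pi$ and hence $w = d^\pi/d^\mu$ (dividing by $d^\mu>0$, which holds everywhere by assumption). This is precisely where $\gamma\in(0,1)$ is essential: the map $\nu\mapsto(1-\gamma)p_0+\gamma T^\pi\nu$ differs from its value at another measure by $\gamma T^\pi(\nu_1-\nu_2)$, and since $T^\pi$ is a Markov operator it is nonexpansive in total-variation (equivalently $\ell_1$) norm, making the whole map a $\gamma$-contraction; Banach's theorem then gives a unique fixed point, which must be $d^\pi$. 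I would also flag the measure-theoretic care in passing from ``$\int f g = 0$ for all $f$'' to ``$g\equiv 0$'' on the continuous state space, and note that the argument degenerates at $\gamma=1$ exactly because the contraction is lost, which is why the footnote imposes the extra normalization $\E_{d^\mu}[w]=1$ there.
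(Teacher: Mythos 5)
Your proof is correct, but there is nothing in this paper to compare it against: the statement is Theorem~2 of the paper only in the sense of being quoted, with attribution, from \citet{liu2018breaking}; the paper supplies no proof of it (its appendices prove only Theorems~1 and~3). Judged on its own merits, your argument is sound and is essentially the canonical one for this result: the change of measure $w(s)\rho_\pi(s,a)\,d^\mu(s)\mu(a|s)=\tilde{d}(s)\pi(a|s)$ with $\tilde{d}:=w\,d^\mu$, the substitution $T^\mu d^\mu=\gamma^{-1}\bigl(d^\mu-(1-\gamma)p_0\bigr)$ into the $-w(s')$ term, and the cancellation against the initial-state term all check out, collapsing $L(w,f)$ to the pairing of $f$ with the residual of the discounted flow equation at $\tilde{d}$; the forward direction is then immediate, and the converse follows from your contraction argument, which correctly isolates where $\gamma<1$ is used (consistent with the footnote's degenerate behavior at $\gamma=1$). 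Two technical points are worth making explicit rather than waving at: (i) Banach's fixed-point theorem needs $\tilde{d}$ to be a \emph{finite} signed measure, i.e.\ $\E_{s\sim d^\mu}[|w(s)|]<\infty$; this is implicitly forced by demanding that $L(w,f)$ be well defined (take $f\equiv 1$ and integrate out the action in the $w(s)\rho_\pi(s,a)$ term), but it is a hypothesis, not a consequence, of your setup; (ii) the conclusion $w=d^\pi/d^\mu$ can only hold $d^\mu$-almost everywhere --- the assumption $d^\mu>0$ licenses the division, not an upgrade from a.e.\ equality to pointwise equality.
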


This result suggests a constructive procedure for estimating the state distribution ratio using samples from $\mu$, by finding a function $w$ over the states which minimizes $\max_f L(w,f)$. Since the maximization over all measurable functions as per Theorem~\ref{thm:w-def} is intractable,~\citet{liu2018breaking} suggest restricting the maximization to a unit ball in an Reproducing Kernel Hilbert Space (RKHS), which has an analytical solution to the maximization problem, and we use the same procedure to approximate density ratios in our algorithm.

Applying Theorem~\ref{thm:w-def} requires overcoming one final obstacle. The theorem presupposes $d^\mu(s) > 0$ for all $s$. In case where $\mathcal{SA}_{\mu} = \mathcal{S}\times\mathcal{A}$ we can directly apply the theorem. Otherwise in the MDP $M_\mu$, this assumption indeed holds for all states, but $\mu$ never visits the absorbing state $s_{abs}$, or any transitions leading into this state. However, since we know this special state, as well as the dynamics leading in and out of it, we can simulate some samples for this state, effectively corresponding to a slight perturbation of $\mu$ to cover $s_{abs}$. Concretely, we first choose a small smoothing factor $\epsilon \in (0,1)$. For any sample $(s,a,s')$ in our data set, if there exist $k$ actions $\widetilde{a}$ such that $\mu(\widetilde{a}|s) = 0$, then we will keep the old samples with probability $1-\epsilon$ and sample any one of the $k$ actions with probability $\epsilon/k$ uniformly and change the next state $s'$ to $s_{abs}$. If we sampled $\widetilde{a}$, consequently, we would also change all samples after this transition to a self-loop in $s_{abs}$. Thus we create samples drawn according to a new behavior policy which covers all the state action pairs: $\widetilde{\mu} = (1-\epsilon)\mu + \epsilon U(s)$ where $U(s)$ is a uniform distribution over the $k$ actions not chosen by $\mu$ in state $s$. Now we can use Theorem \ref{thm:w-def} and the algorithm from \citet{liu2018breaking} to estimate $d^{\pi}(s)/d^{\widetilde{\mu}}(s)$. Note that the propensity scores and policy gradients computed on this new dataset correspond to the behaviour policy $\widetilde{\mu}$ and not $\mu$. Formally, in place of using \eqref{eqn:our_policy_gradient}, we now estimate:
\begin{equation}
    \mathbb{E}_{d^{\widetilde{\mu}},\widetilde{\mu}} \left[ \frac{d^{\pi}(s)}{d^{\widetilde{\mu}}(s)}\frac{\pi(a|s)}{\widetilde{\mu}(a|s)} \frac{\partial \log \pi(a|s)}{\partial \theta}Q^{\pi}_{M_{\mu}}(s,a) \right] \label{eqn:our_soften_policy_gradient}
\end{equation}
Note that we can estimate the expectation in \eqref{eqn:our_soften_policy_gradient} from the smoothed dataset by construction, since the ratio $\pi(s,a)/\widetilde{\mu}(s,a)$ in all states are known.

Now that we have an algorithm for estimating policy gradients from~\eqref{eqn:our_soften_policy_gradient}, we can plug this into any policy gradient optimization method. Following prior work, we incorporate our off-policy gradients into an actor-critic algorithm.  For learning the critic $Q^{\pi}_{M_{\mu}}(s,a)$, we can use any off-policy Temporal Difference \citep{bhatnagar2009convergent, maei2011gradient} or Q-learning algorithm \citep{watkins1992q}. In our algorithm, we fit an approximate value function $\widehat{V}$ by:
\footnote{For simplicity, Eqn 4 views $\widehat{V}(s)$ in the tabular setting. See Line 14 in Alg \ref{alg:opposd} for the function approximation case.}
\begin{equation}
    \widehat{V}(s) \leftarrow \widehat{V}(s) + \alpha_c \frac{\pi(a|s)}{\widetilde{\mu}(a|s)} \left( R^{\lambda}(s,a) - \widehat{V}(s) \right),
    \label{eqn:critic-update}
\end{equation}
where $\alpha_c$ is the step-size for critic updates and $R^{\lambda}(s,a)$ is the off-policy $\lambda$-return:
\begin{small}
\begin{align*}
    R^{\lambda}(s,a) = r(s,a) + (1-\lambda) \gamma \widehat{V}(s') + \lambda \gamma \frac{\pi(a|s)}{\widetilde{\mu}(a|s)} R^{\lambda}(s',a'),
\end{align*}
\end{small}
and $(s,a,s',a')$ is generated by executing $\widetilde{\mu}$. After we learn $\widehat{V}$, $R^\lambda$ serves the role of $Q^{\pi}_{M_{\mu}}$ in our algorithm. 

Given the estimates of the state distribution ratio from~\citet{liu2018breaking} and the critic updates from~\eqref{eqn:critic-update}, we can now update the policy by plugging these quantities in~\eqref{eqn:our_soften_policy_gradient}. It remains to specify the initial conditions to start the algorithm. Since we have data collected from a behavior policy, it is natural to also warm-start the actor policy in our algorithm to be the same as the behavior policy and correspondingly the critic and $w$'s to be the value function and distribution ratios for the behavior policy. This can be particularly useful in situations where the behavior policy, while suboptimal, still gets to states with high rewards with a reasonable probability. Hence we use behavior cloning to warm-start the policy parameters for the actor, use on-policy value function learning for the critic and also fit the state ratios $w$ for the actor obtained by behavior cloning. Note that while the ratio will be identically equal to 1 if our behavior cloning was perfect, we actually estimate the ratio to better handle imperfections in the learned actor initialization.

\begin{algorithm}[htb]
 \caption{OPPOSD: Off-Policy Policy Optimization with State Distribution Correction}
 \label{alg:opposd}
 \begin{algorithmic}[1]
 \REQUIRE $\mathcal{S}, \mathcal{A}, \mu, \mathcal{D}: \left\{ \{ s_t^{i},a_t^{i},r_t^{i},\mu(a_t^{i}|s_t^{i}) \}_{t=0}^{T} \right\}_{i=0}^{n}$
 \REQUIRE Hyperparameters $\lambda$, $\gamma$, $N_w$, $\alpha_w$, $N_c$, $\alpha_c$, $\alpha$
 \STATE Warm start $\pi_{\theta}$, $\widehat{V}_{\theta_c}$, $w_{\theta_w}$
 \STATE Pad $\mathcal{D}$ to get $\mathcal{D'}, \widetilde{\mu}$ if necessary
 \FOR{each step of policy update}
    \FOR{state ratio updates $i=1,2,\ldots,N_w$}
        \STATE Sample a mini-batch $B_w \sim \mathcal{D'}$ according to $\widehat{d}_\gamma$
        \footnotemark
        \IF{$\gamma=1$}
            \STATE Perform one update according to Algorithm 1 in \citet{liu2018breaking} with stepsize $\alpha_w$
        \ELSE
            \STATE Perform one update according to Algorithm 2 in \citet{liu2018breaking} with stepsize $\alpha_w$
        \ENDIF
    \ENDFOR
    \FOR{critic updates $i=1,2,\ldots,N_c$}
        \STATE Sample a mini-batch $B_c \sim \mathcal{D'}$
        \STATE $\theta_c \leftarrow \theta_c - \alpha_c \frac{\partial\ell_c}{\partial\theta_c} $, where:
        $\ell_c = \frac{1}{|B_c|}\sum_{(s,a,s') \sim B_c} \frac{\pi(a|s)}{\widetilde{\mu}(a|s)} \left( R^{\lambda}(s,a) - \widehat{V}(s) \right)^2$
    \ENDFOR
    \STATE Sample a mini-batch $B_a \sim \mathcal{D'}$ according to $\widehat{d}_\gamma$
    \STATE $z_w \leftarrow \frac{1}{|B_a|}\sum_{s \sim B_a} w(s)$
    \STATE $Q^{\pi}(s,a) \leftarrow \mathds{1}\{(s,a) \in \mathcal{SA}_{\mu}\} R^{\lambda}(s,a)$
    \STATE $\theta \leftarrow \theta - \frac{\alpha}{|B_a|}\sum_{s \sim B_a} \frac{w(s)}{z_w}\rho(s,a) \frac{\partial\log\pi(a|s)}{\partial\theta} Q^{\pi}(s,a)$
 \ENDFOR
 \end{algorithmic}
\end{algorithm}
\footnotetext{$\widehat{d}_{\gamma} = \frac{1}{\sum_{t=0}^T \gamma^t} \sum_{t=0}^{T} \gamma^t \widehat{d}_{t}(s)$, where $\widehat{d}_{t}(s)$ is the empirical state distribution at time step $t$ in dataset $\mathcal{D'}$}

A full pseudo-code of our algorithm, which we call \alg for Off-Policy Policy Optimization with State Distribution Correction, is presented in Algorithm~\ref{alg:opposd}. We mention a couple of implementation details which we found helpful in improving the convergence properties of the algorithm. Typical actor-critic algorithms update the critic once per actor update in the on-policy setting. However, in the off-policy setting, we find that performing multiple critic updates before an actor update is helpful, since the off-policy TD learning procedure can have a high variance. Secondly, the computation of the state distribution ratio $w$ is done in an online manner similar to the critic updates, and analogous to the critic, we always retain the state of the optimizer for $w$ across the actor updates (rather than learning the $w$ from scratch after each actor update). Similar to the critic, we also perform multiple $w$ updates after each actor update. These choices are intuitively reasonable as the standard two-time scale asymptotic analysis of actor-critic methods~\citep{borkar2009stochastic} does require the critic to converge faster than the actor.

\section{CONVERGENCE RESULT}
\label{sec:analysis}

In this section, we present two main results to demonstrate the theoretical advantage of our algorithm. First we present a simple scenario where the prior approach of \offpac yields an arbitrarily biased gradient estimate, despite having access to a perfect critic. In contrast \alg estimates the gradients correctly whenever the distribution ratios in~\eqref{eqn:our_policy_gradient} and the critic are estimated perfectly, by definition. We will further provide a convergence result for \alg to a stationary point of the expected reward function.

\paragraph{A hard example for Off-PAC} Many prior off-policy policy gradient methods use the policy gradient estimates proposed in \citet{degris2012off}.
\begin{equation*}
    \goffpac(\theta) = \sum_{s} d^{\mu}(s) \sum_{a} \pi(a|s) \frac{\partial \log\pi(a|s)}{\partial \theta} Q^{\pi}(s,a)
\end{equation*}
Notice that, in contrast to the exact policy gradient, the expectation over states is taken with respect to the behavior policy $\mu$ distribution instead of $\pi$. In tabular settings this can lead to correct policy updates, as proved by \citet{degris2012off}. We now present an example where the policy gradient computed this way is problematic when using function approximators. Consider the problem instance shown in Figure \ref{fig:hard_example}, where the behavior policy $\pi_b$ is uniformly random in any state. Now we consider policies parameterized by a parameter $\alpha \in [0,1]$ where $\pi_\alpha(s_1,\ell) =  \pi_\alpha(s_2, \ell) = \alpha$ and $\pi_\alpha(s,\ell) = 1$ for other states.
Thus $\pi_\alpha$ aliases the states $s_3$ and $s_4$ as a manifestation of imperfect representation which is typical with large state spaces. Clearly the optimal policy is $\pi_1$. With perfect critic, in Appendix \ref{sec:hard_example} we show that for \offpac the gradient vanishes for any policy $\pi_\alpha$, meaning that the algorithm can be arbitrarily sub-optimal at any point during policy optimization. Our gradient estimator~\eqref{eqn:our_policy_gradient} instead evaluates to $\partial R^{\pi_\alpha}_{M_\mu}/\partial\alpha = (1+\alpha)/4$, which is correctly maximized at $\alpha = 1$.


\begin{figure}[ht]
\centering
  \includegraphics[height=1.5in]{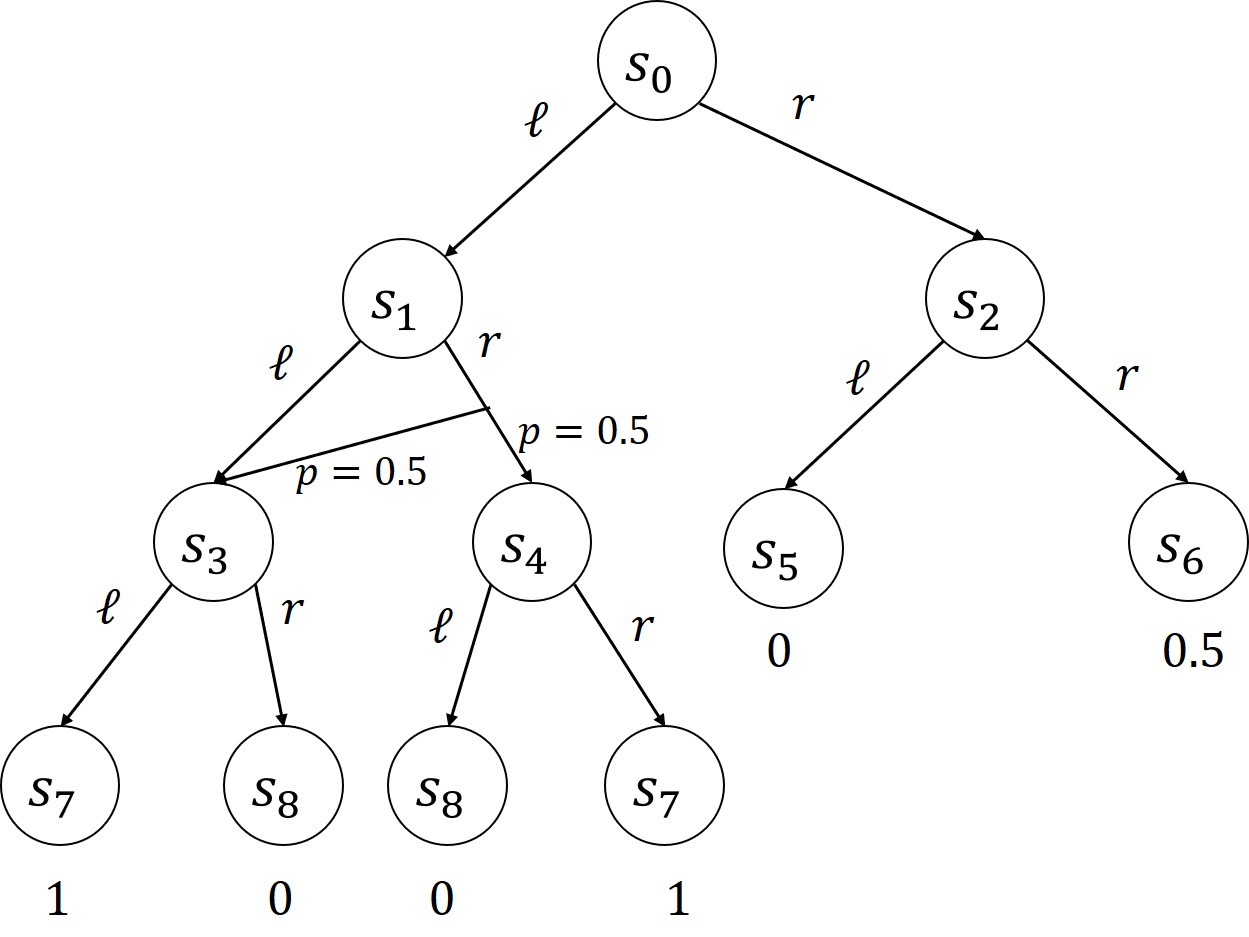}
    \caption{Hard example for Off-PAC~\citep{degris2012off}}
    \label{fig:hard_example}
\end{figure}


\paragraph{Convergence results for \alg} We next ask whether \alg converges, given reasonable estimates for the density ratio and the critic. To this end, we need to introduce some additional notations and assumptions. Suppose we run \alg over some parametric policy class $\pi_\theta$ with $\theta \in \Theta$. In the sequel, we use subscripts and superscripts by $\theta$ to mean the corresponding quantities with $\pi_\theta$ to ease the notation. We begin by describing an abstract set of assumptions and a general result about the convergence of \alg, when we run it over the policies $\pi_\theta$ given data collected with an arbitrary policy $\nu$, before instantiating our assumptions for the specific structure of 
$\widetilde{\mu}$ used in our algorithm.
\begin{definition}
A function $f~:~\R^d \to \R$ is $L$-smooth when
\[
    \|\nabla f(x) - \nabla f(y) \|_2 \leq L \| x - y\|_2, \quad \mbox{ for all $x,y \in \R^d$,}
\]
\end{definition}

\begin{assumption}
\label{assum:convergence}
$\forall (s,a)$ pairs, $\forall \theta \in \Theta$ and a data collection policy $\nu$, we assume that the MDP guarantees:
\begin{enumerate}[nolistsep]
    \item $\norm{\frac{\partial \pi_\theta(a|s)}{\partial\theta}} \le \gmax$.
    \item $Q^{\theta}(s,a) \leq \vmax$.
    \item $\nu(a|s) \ge \numin$.
    \item $w(s) := d^{\theta}(s)/d^\nu(s)$ has a bounded second moment $\wmax^2 := E_\nu [w(s)^2] \le \infty$.
    \item The expected return of $\pi_{\theta}$: $R^{\theta}$ is a differentiable, $G$-Lipschitz and $L$-smooth function w.r.t. $\theta$.
\end{enumerate}
\end{assumption}

\begin{theorem}
Assume an MDP, a data collection policy $\nu$ and function classes $\{ \pi_\theta \}$ and $\{ \wh \}$ satisfy Assumption \ref{assum:convergence}.
Suppose \alg with policy parameters $\theta_k$ at iteration $k$ is provided critic estimates $\Qh_k$ and distribution ratio estimates $\wh_k$  satisfying $\E_{(s,a)\sim d^\nu}(w_{\theta_k}(s,a) - \wh_k(s,a))^2 \leq \eps^2_{w,k}$ and $\E_{(s,a)\sim d^\nu}(Q^{\theta_k}(s,a) - \Qh_k(s,a))^2 \leq \eps^2_{Q,k}$ for iterations $k=1,2,\ldots K$. Then
\begin{align}
&\frac{1}{K}\sum_{k=1}^K \E\left[\norm{\nabla_\theta R^{\theta_k}}^2\right] \leq \frac{2\vmax}{K} + \nonumber \\
& \frac{\sum_{k=1}^K O\big( (\eps_{w,k}^2 \vmax^2 + \eps_{Q,k}^2 (\wmax^2+\eps_{w,k}^2))\gmax^2\big)}{K\numin^2} \nonumber 
\end{align}
\label{thm:convergence}
\end{theorem}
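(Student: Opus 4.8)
The plan is to treat Theorem~\ref{thm:convergence} as a standard analysis of biased stochastic gradient ascent on the (nonconvex) objective $R^\theta$, where the only departure from the textbook setting is that the update direction uses the approximate critic $\Qh_k$ and density ratio $\wh_k$ in place of $Q^{\theta_k}$ and $w_{\theta_k}$, and is hence biased. Writing $g_k = \nabla_\theta R^{\theta_k}$ for the true gradient (which, by~\eqref{eqn:our_policy_gradient}, equals $\E_{d^\nu,\nu}[w_{\theta_k}\rho\,\nabla_\theta\log\pi\,Q^{\theta_k}]$) and $\bar g_k$ for the direction actually followed (the same expectation with $\wh_k,\Qh_k$), I would start from the $L$-smoothness descent inequality (Assumption~\ref{assum:convergence}.5), $R^{\theta_{k+1}} \ge R^{\theta_k} + \alpha\inner{g_k}{\bar g_k} - \tfrac{L\alpha^2}{2}\norm{\bar g_k}^2$, take expectations, and telescope over $k=1,\dots,K$. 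Since $Q^\theta \le \vmax$ (Assumption~\ref{assum:convergence}.2) bounds the range of $R^\theta$, the telescoped left-hand side contributes the leading $O(\vmax/K)$ term, and an appropriate constant step size $\alpha \asymp 1/L$ keeps a positive multiple of $\tfrac1K\sum_k\E\norm{g_k}^2$ after absorbing the $\norm{\bar g_k}^2$ term.

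The crux is bounding the gradient bias $b_k := \bar g_k - g_k$. The key structural observation is that the per-sample integrand satisfies $\rho_\pi(s,a)\,\partial_\theta\log\pi(a|s) = \partial_\theta\pi(a|s)/\nu(a|s)$, so by Assumptions~\ref{assum:convergence}.1 and~\ref{assum:convergence}.3 its norm is uniformly bounded by $\gmax/\numin$; this avoids needing a lower bound on $\pi$ itself and is the source of the $\gmax^2/\numin^2$ factor. I would then decompose the error in the remaining factor as $\wh_k\Qh_k - w_{\theta_k}Q^{\theta_k} = \wh_k(\Qh_k - Q^{\theta_k}) + (\wh_k - w_{\theta_k})Q^{\theta_k}$ and bound $\norm{b_k}$ by Cauchy--Schwarz. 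The first piece gives $\sqrt{\E_\nu[\wh_k^2]}\,\eps_{Q,k}$, and since $\E_\nu[\wh_k^2]\le 2\wmax^2 + 2\eps_{w,k}^2$ (triangle inequality in $L^2$ using Assumption~\ref{assum:convergence}.4), this produces the $\eps_{Q,k}^2(\wmax^2+\eps_{w,k}^2)$ term; the second piece gives $\vmax\,\eps_{w,k}$ using Assumption~\ref{assum:convergence}.2, producing the $\eps_{w,k}^2\vmax^2$ term. Squaring yields exactly the summand $O\big((\eps_{w,k}^2\vmax^2 + \eps_{Q,k}^2(\wmax^2+\eps_{w,k}^2))\gmax^2\big)/\numin^2$.

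Finally I would combine the two pieces: applying Young's inequality to $\inner{g_k}{\bar g_k} = \norm{g_k}^2 + \inner{g_k}{b_k} \ge \tfrac12\norm{g_k}^2 - \tfrac12\norm{b_k}^2$, substituting the bias bound, telescoping, and dividing by $K$ delivers the stated inequality; the self-normalization $z_w$ in Algorithm~\ref{alg:opposd} is benign because $\E_\nu[w_{\theta_k}]=1$, and any minibatch sampling variance either contributes lower-order terms or is folded into $\eps_{w,k},\eps_{Q,k}$. I expect the main obstacle to be the bias analysis: the update couples two separately estimated, possibly unbounded quantities ($\wh_k$ and $\Qh_k$) that are controlled only in mean square, so care is needed to route each $L^2$ error through Cauchy--Schwarz without assuming higher moments, and in particular to correctly surface the cross term $\eps_{w,k}^2\eps_{Q,k}^2$ hidden inside $\eps_{Q,k}^2(\wmax^2+\eps_{w,k}^2)$. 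Tracking the exact leading constant ($2\vmax$) against the step-size and $L$ dependence is then routine.
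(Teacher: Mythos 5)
Your proposal is correct and follows essentially the same route as the paper: the paper also runs a biased-SGD descent-lemma analysis (packaged as an abstract theorem for a bias/variance-controlled gradient oracle with stepsize $\eta = 1/L$, then instantiated with $f = -R^\theta$), and its bias bound uses the identical decomposition $\wh\Qh - w Q^\theta = (\wh - w)Q^\theta + \wh(\Qh - Q^\theta)$, the same observation that $\norm{\rho\, g_\theta} = \norm{\partial_\theta \pi(a|s)/\nu(a|s)} \le \gmax/\numin$, Cauchy--Schwarz against the two MSE conditions, and the bound $\E_\nu[\wh^2] \lesssim \wmax^2 + \eps_w^2$. The only differences are cosmetic: the paper handles the bias--gradient cross term via the $G$-Lipschitz assumption (a term that vanishes at $\eta = 1/L$) where you use Young's inequality, and it models the update as a stochastic oracle with an additional second-moment condition rather than as a deterministic plug-in expectation; neither choice changes the structure of the argument or the final bound.
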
 
That is, when Assumption \ref{assum:convergence} holds, the scheme converges to an approximate stationary point given estimators $\wh$ and $\Qh$ with a small average MSE across the iterations under $\nu$. An immediate consequence of the theorem above is that as long as we guarantee that $\lim_{K \to \infty} \frac{\sum \eps_{w,k}^2 + \eps_{Q,k}^2}{K} = 0$, which a reasonable online critic and $w$ learning algorithm can guarantee, we have:
$\lim_{K \to \infty} \frac{1}{K}\sum_{k=1}^K \E\left[\norm{\nabla_\theta R^{\theta_k}_{M_\nu}}^2\right] = 0$ which implies the procedure will converge to a stationary point where the true policy gradient is zero. 
Proof of this theorem in Appendix \ref{sec:proof_thm3_appendix}.

We now discuss the validity of Assumption~\ref{assum:convergence} in the specific context of the data collection policy $\widetilde{\mu}$ used in \alg as well as the augmented MDP $M_\mu$. The first assumption, that the gradient of policy distribution is bounded, can be achieved by an appropriate policy parametrization such as a linear or a differentiable neural network-based scoring function composed with a softmax link. The second assumption on bounded value functions is standard in the literature. In particular, both these assumptions are crucial for the convergence of policy gradient methods even in an on-policy setting. The third assumption on lower bounded action probabilities holds by construction for the policy $\widetilde{\mu}$ due to the $\epsilon-$smoothing. The fourth assumption is about finite second momentum of distribution ratios. Prior works on off-policy evaluation in RL \citep{Jiang2016, wang2017optimal} both provide lower bounds for off-policy policy evaluation MSE in MDP and contextual bandit settings respectively that scale with the second moment of propensity scores, so this dependence is unavoidable for off-policy evaluation and learning.
Finally the regularity assumption on the smoothness of the reward function is again standard for policy gradient methods even in an on-policy setting.

Thus we find that under standard assumptions for policy gradient methods, along with some reasonable additional conditions, we expect \alg to have good convergence properties in theory.

\section{EXPERIMENTAL EVALUATION}
\label{sec:expts}
\begin{figure*}[!th]
\centering
    \begin{subfigure}[t]{0.46\textwidth}
    \includegraphics[height=2in]{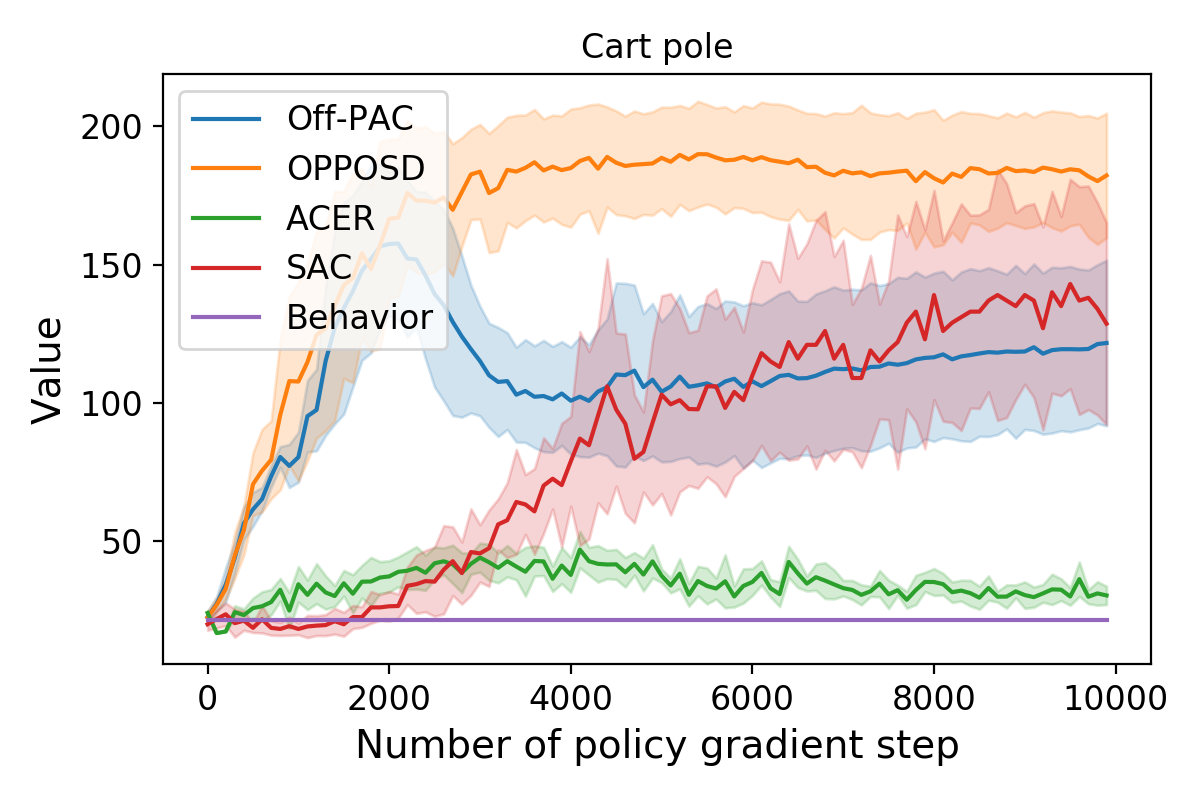} 
    \caption{CartPole-v0. Optimal policy gets a reward of 200, while the uniformly random data collection policy obtains 22.}
    \label{fig:cartpole}
    \end{subfigure}
    \qquad
    \begin{subfigure}[t]{0.46\textwidth}
    \includegraphics[height=2in]{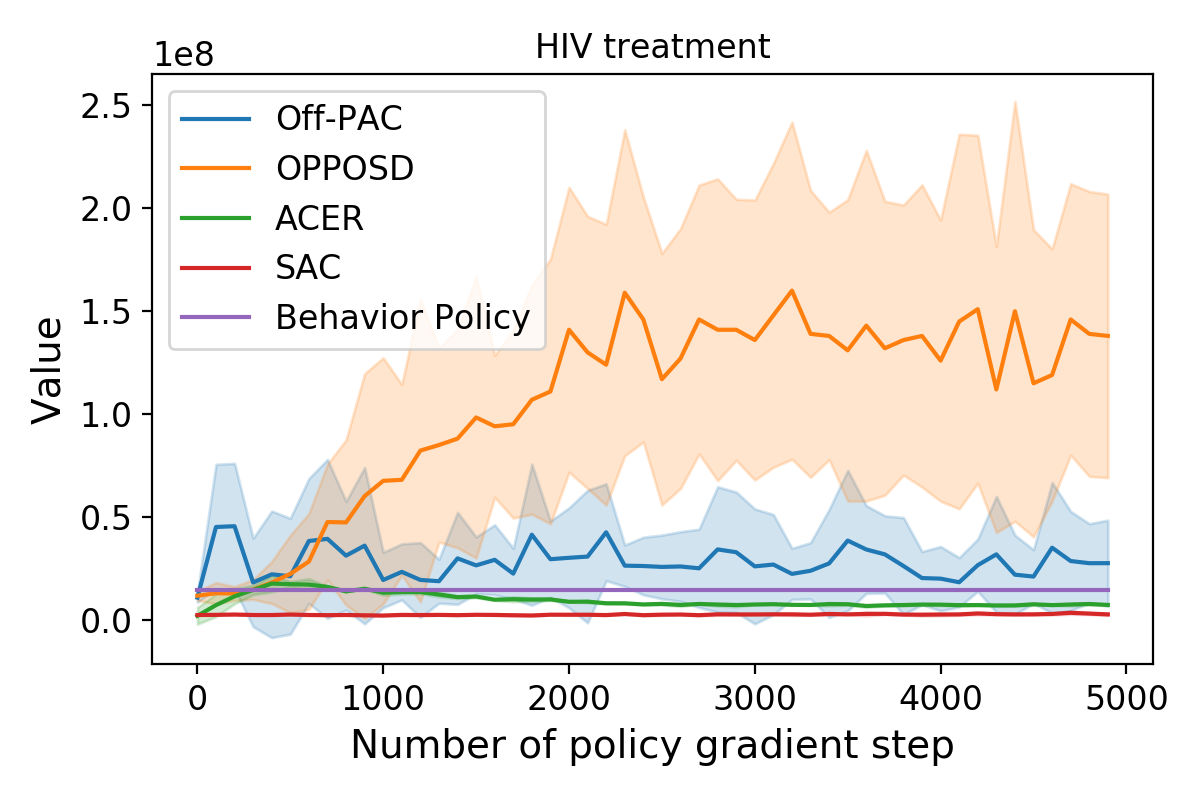} 
    \caption{HIV treatment simulator.  The data collection policy obtains a reward of 1.5E7.}
    \label{fig:hiv}
    \end{subfigure}
    \hfill
    \caption{Episodic scores over length 200 episodes in CartPole-v0~\citep{barto1983neuronlike,opanaigym} and HIV treatment simulator~\cite{ernst2006clinical}. Shaded region represents 1 standard deviation over 10 runs of each method.}
\end{figure*}
In this section we study the empirical properties of \alg, with an eye towards two questions:
\begin{enumerate}[nolistsep]
\item Does the state distribution correction help improve the performance of off-policy policy optimization?
\item Can we identify the best policy from the optimization path using off-policy policy evaluation?
\end{enumerate}

\paragraph{Baseline and implementation details} To answer the first question, we compare \alg with its closest prior work, but without the state distribution correction, that is the Off-PAC algorithm \citep{degris2012off}.

We implement both \alg and Off-PAC using feedforward neural networks for the actor and critic, with ReLU hidden layers. For state distribution ratio $w$, we also use a neural network with ReLU hidden layers, with the last activation function $\log ( 1 + \exp(x))$ to guarantee that $w(s) > 0$ for any input. To make a fair comparison, we keep the implementation of Off-PAC as close as possible to \alg other than the use of $w$. Concretely, we also equip Off-PAC with the enhancements that  improve empirical performance 
such as the warm start of the actor and critic, as well as several critic updates per actor update. We use the same off-policy critic learning algorithm for Off-PAC and \alg. To learn $w$, we use Algorithm 1 (average reward) in \citet{liu2018breaking} with RBF kernel for CartPole-v0 experiment, and Algorithm 2 (discounted reward) in \citet{liu2018breaking} with RBF kernel for HIV experiment. We normalize the input to the networks to have 0 mean and 1 standard deviation, and in each mini-batch we normalized kernel loss of fitting $w$ by the mean of the kernel matrix elements, to minimize the effect of kernel hyper-parameters on the learning rate. Full implementation details when omitted are provided in the Appendix \ref{sec:exp_detail}. We also compare \alg with two more recent off-policy policy gradient algorithms, namely soft actor-critic (SAC) and actor-critic with experience replay (ACER). 

\paragraph{Simulation domains}
We compare the algorithms in two simulation domains. The first domain is the \emph{cart pole} control problem, where an agent needs to balance a mass attached to a pole in an upright position, by applying one of two sideways movements to a cart on a frictionless track. The horizon is fixed to 200. If the trajectory ends in less than 200 steps, we pad the episode by continuing to sample actions and repeating the last state. We use a uniformly random policy to collect $n=500$ trajectories as off-policy data, which is a very challenging data set for off-policy policy optimization methods to learn from as this policy does not attain the desired upright configuration for any prolonged period of time. 

The second domain is an \emph{HIV treatment simulation} described in \cite{ernst2006clinical}. Here the states are six-dimensional real-valued vectors, which model the response of numbers of cells/virus to a treatment. Each action corresponds to whether or not to apply two types of drug, leading to a total of 4 actions. The horizon of this domain is 200 and discount factor is set by the simulator to $\gamma = 0.98$. A uniformly random policy does not visit any rewarding states often enough to collect useful data for off-policy learning. To simulate an imperfect but reasonable data collection policy, we first train an on-policy actor critic method to learn a reasonable (but still far from optimal) policy $\hat{\pi}$. We then use the data collection policy $\mu = 0.7*\hat{\pi} + 0.3*U$, where $U$ is the uniformly random policy, to collect $n=1000$ trajectories. 

Though in both domains our data collection policy is eventually able to cover the whole state-action space, the situation under finite amounts of data is different. In cart pole since an optimal policy can control the cart to stay in a small region, it is relatively easy for the uniform random policy to cover the states visited by the optimal policy. In the HIV treatment domain, it is unlikely that the logged data will cover the desirable state space. For the horizon of problem, theoretical derivation of our algorithm requires infinite horizon, but for computational feasibility, the horizon is truncated as 200 steps in the both domains.

\begin{figure*}[!t]
\centering
    \includegraphics[width=.45\textwidth]{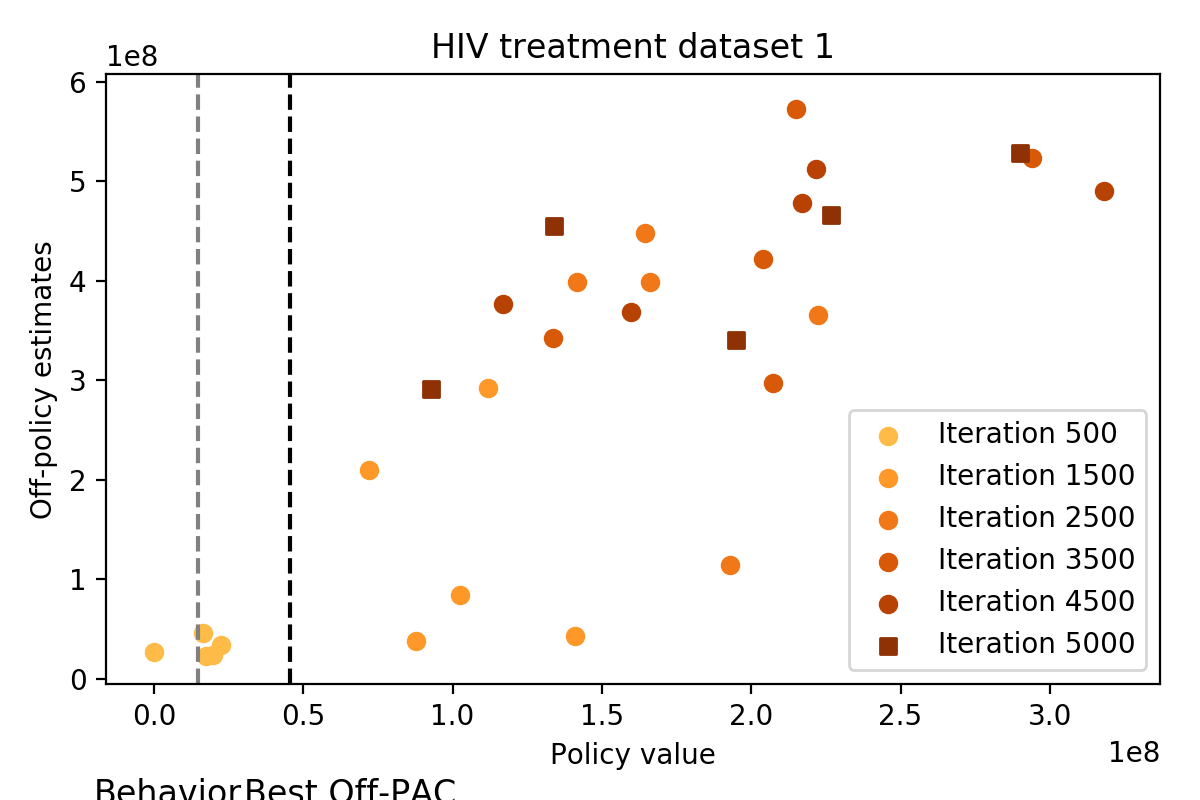} 
    \includegraphics[width=.45\textwidth]{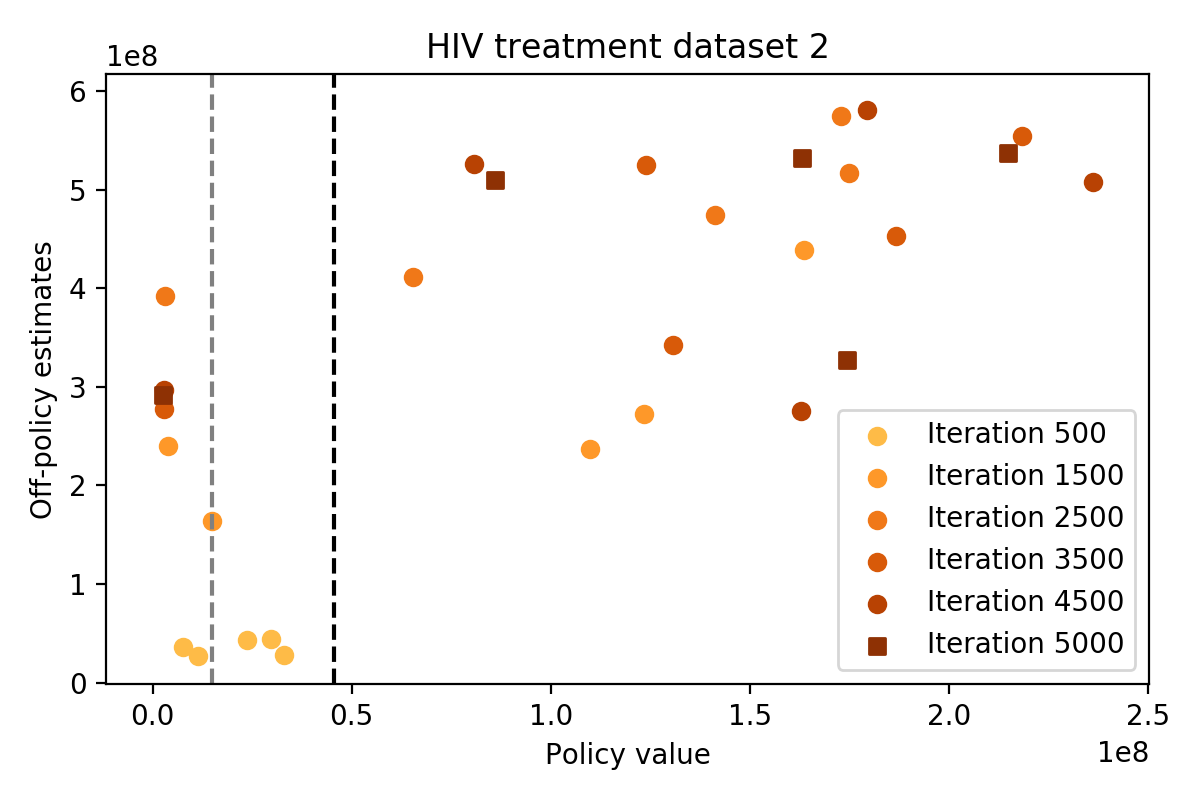}
    \caption{Off-policy policy evaluation results of saved policies from \alg. The estimated and true values exhibit a high correlation (coefficient = 0.80 and 0.71 in the left and right plots respectively) for most policies. Two panels correspond to repeating the whole procedure using two datasets from the same behavior policy.}
    \label{fig:hiv_oppe}
\end{figure*}

\paragraph{Impact of state reweighting on 
policy optimization}
In Figures \ref{fig:cartpole} and \ref{fig:hiv}, we plot the on-policy evaluation values of the policies produced by \alg and Off-PAC during the actor updates across 10 runs. Each run uses a different data set collected using the behavior policy as well as a different random seed for the policy optimization procedure. In each run we use the same dataset for each method to allow paired comparisons. We evaluate the policy after every 100 actor updates using on-policy Monte-Carlo evaluation over 20 trajectories. The results are averaged over 10 runs and error bars show the standard deviation. Along X-axis, the plot shows how the policy value changes as we take policy gradient steps.

At a high-level, we see that in both the domains our algorithm significantly improves upon the behavior policy, and eventually outperforms Off-PAC consistently. Zooming in a bit, we see that for the initial iterates on the left of the plots, the gap between \alg and Off-PAC is small as the state distribution between the learned policies is likely close enough to the behavior policy for the distribution mismatch to not matter significantly. This effect is particularly pronounced in Figure~\ref{fig:cartpole}. However, the gap quickly widens as we move to the right in both the figures. In particular, Off-PAC barely improves over behavior policy in Figure~\ref{fig:hiv}, while \alg finds a significantly better policy. Overall, we find that these results are an encouraging validation of our intuition about the importance of correcting the state distribution mismatch.

As for comparison with SAC and ACER, we would like to emphasize that we run SAC and ACER with a fixed batch of data which is not exactly the same setting as they were proposed for. In Appendix \ref{sec:appendix_related_work} we discuss more details on the difference of settings in this paper and SAC and ACER originally. As an illustration of the effect of this difference on algorithm performance, with the same function approximator class, both SAC and ACER perform worse than \alg in our experimental domains. Since SAC and ACER are not proposed for the batch off-policy settings and this simulation does not reflect these algorithms’ abilities in their proposed settings.

\paragraph{Identifying Best Policy by Off-Policy Evaluation}
While \alg consistently outperforms Off-PAC in average performance across 10 runs in both the domains, 
there is still significant variance in both the methods across runs. Given this variance, a natural question is whether 
we can identify 
the best performing policies, during and across multiple runs of \alg for a single dataset. To answer this question, 
we checkpoint all the policies produced by \alg after every 1000 actor updates, across 5 runs of our 
algorithm with the same input dataset generated in the HIV domain. 
We then evaluate these policies using the off-policy policy evaluation (OPPE) method in \citet{liu2018breaking}.  
The evaluation is performed with 
an additional dataset sampled from the behavior policy.
This corresponds to the typical practice of sample splitting between optimization and evaluation. 

We show the quality of the OPPE estimates against the true policy values for two different datasets for OPPE sampled from the behavior policy in the two panels of Figure \ref{fig:hiv_oppe}. In each plot, the X-axis shows the on-policy Monte-Carlo evaluation results and Y-axis shows the OPPE estimates. We find that the OPPE estimates are generally well correlated with the on-policy values, and picking the policy with the best OPPE estimate results in a true value substantially better than both the best Off-PAC result. A closer inspection also reveals the importance of this validation step. The red squares correspond to the final iterate of \alg in each of the 5 iterations, which has a very high value in some cases, but somewhat worse in other runs. Using OPPE to robustly select a good policy adds a layer of additional assurance to off-policy policy optimization procedure.

\section{DISCUSSION}
\label{sec:discussion}
We presented a new off-policy actor critic algorithm, \alg, based on a recently proposed stationary state distribution ratio estimator. There exist many interesting next steps, including different critic learning methods which may also leverage the state distribution ratio, and exploring alternative methods for policy evaluation or alternative stationary state distribution ratio estimators~\citep{hallak2017consistent,gelada2019off}. Another interesting direction is to improve the sample efficiency of online policy gradient algorithms by using our corrected gradient estimates. More discussion on concurrent potential future work is in Appendix \ref{sec:appendix_related_work}.

In parallel with our work, \citet{zhang2019generalized} have presented a different off-policy policy gradient approach. While similarly motivated by the bias in the Off-PAC gradient estimator, the two works have important differences. On the methodological side, \citet{zhang2019generalized} start from an off-policy objective function and derive a gradient for it. In contrast, we compute an off-policy estimator for the gradient of the on-policy objective function. The latter leads to a much simpler method, both conceptually and computationally, as we do not need to compute the gradients of the visitation distribution. On the other hand, \citet{zhang2019generalized} focus on incorporating more general interest functions in the off-policy objective, and use the emphatic weighting machinery. In terms of settings, our approach works in the offline setting (though easily extended to online), while they require an online setting in order to compute the gradients of the propensity score function. Finally, we present convergence results quantifying the error in our critic and propensity score computations while \citet{zhang2019generalized} assume a perfect oracle for both and rely on a truly unbiased gradient estimator for the convergence results. 

To conclude, our algorithm fixes the bias in off-policy policy gradient estimates introduced by the behavior policy's stationary state distribution. 
We prove under certain assumptions our algorithm is guaranteed to converge. We also show that ignoring the bias 
due to the mismatch in state distributions can make an off policy gradient algorithm fail even in a simple illustrative example, and that by 
accounting for this mismatch our approach yields significantly better performance in two simulation domains. 

\subsubsection*{Acknowledgements}
We acknowledge a NSF CAREER award, an ONR Young Investigator Award, and support from Siemens.

\bibliography{references}
\bibliographystyle{plainnat}

\appendix
\onecolumn

\section{Proof of Theorem \ref{thm:augmented_mdp}}
\label{sec:proof_thm1_appendix}
\begin{proof}
For any trajectory sampled from policy $\pi$, if every $s_k, a_k \in \mathcal{SA}_{\mu}$ then $\sum_{t=0}^{T} \gamma^t r(s_t,a_t) = \sum_{t=0}^{T} \gamma^t r_\mu(s_t,a_t)$. If not, let $s_{k+1},a_{k+1}$ be the first state-action pair that is not in $\mathcal{SA}_{\mu}$.
Then $\sum_{t=0}^{T} \gamma^t r(s_t,a_t) \ge \sum_{t=0}^{k} \gamma^t r(s_t,a_t) = \sum_{t=0}^{k} \gamma^t r_{\mu}(s_t,a_t) + \sum_{t=k+1}^{k} \gamma^t r_{\mu}(s_{abs},a_t)$.
Dividing the accumulated rewards by $\frac{1}{\sum_{t=0}^T \gamma^t}$, taking the limit of $T \to \infty$ and expectation over trajectories induced by $\pi$, we have that: $R^{\pi}_{M} \ge R^{\pi}_{M_{\mu}}$.
For $\pi^*$, since $\mathcal{SA}_{\mu}$ covers all state-action pairs reachable by $\pi^*$, so the expected return remains the same.
\end{proof}

\section{Proof of Theorem~\ref{thm:convergence}}
\label{sec:convergence}

We first state and prove an abstract result. Suppose we have a function $f~:~\R^d \to \R$ which is differentiable, $G$-Lipschitz and $L$-smooth, and $f$ attains a finite minimum value $f^* := \min_{x \in \R^d} f(x)$. Suppose we have access to a noisy gradient oracle which returns a vector $\grad(x) \in \R^d$ given a query point $x$. We say that the vector is $\sigma,B$-accurate for parameter $\sigma, B \geq 0$ if for all $x \in \R^d$, the quantity $\delta(x) := \grad(x) - \nabla f(x)$ satisfies

\begin{equation}
\|\E\left[ \delta(x) \mid x\right]\| \leq B \quad \mbox{and} \quad \E\left[\|\delta(x)\|^2 \mid x \right] \leq 2(\sigma^2 + B^2).
\label{eqn:grad_oracle}
\end{equation}

Notice that the expectations above are only with respect to any randomness in the oracle, while holding the query point fixed.
Suppose we run the stochastic gradient descent algorithm using the oracle responses, that is we update $x_{k+1} = x_k - \eta \grad(x_k)$. While several results for the convergence of stochastic gradient descent to a stationary point of a smooth, non-convex function are well-known, we could not find a result for the biased oracle assumed here and hence we provide a result from first principles. We have the following guarantee on the convergence of the sequence $x_k$ to an approximate stationary point of $f$.

\begin{theorem}
    Suppose $f$ is differentiable and $L$-smooth, and the approximate gradient oracle satisfies the conditions~\eqref{eqn:grad_oracle} with parameters $(\sigma_k, B_k)$ at iteration $k$. Then stochastic gradient descent with the oracle, with an initial solution $x_1$ and stepsize $\eta = 1/L$ satisfies after $K$ iterations:
    \[
    \frac{1}{K} \sum_{k=1}^K \E[\norm{\nabla f(x_k)}^2] \leq \frac{2}{K}(f(x_1) - f^*) + \frac{2}{LK}\sum_{k=1}^K(\sigma_k^2 + B_k^2).
    \]
\label{thm:apx_stationary}
\end{theorem}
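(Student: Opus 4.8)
The plan is to run the textbook descent-lemma argument for smooth nonconvex stochastic gradient descent, adapted to tolerate the bias in the oracle. First I would apply $L$-smoothness of $f$ to the consecutive iterates, which gives the one-step inequality
\[
f(x_{k+1}) \leq f(x_k) + \inner{\nabla f(x_k)}{x_{k+1} - x_k} + \frac{L}{2}\norm{x_{k+1}-x_k}^2.
\]
Then I would substitute the update $x_{k+1} - x_k = -\eta\,\grad(x_k)$ and decompose the oracle as $\grad(x_k) = \nabla f(x_k) + \delta(x_k)$, expanding both the inner-product and the squared-norm into three pieces: a diagonal part in $\nabla f(x_k)$, a cross term linear in $\delta(x_k)$, and a pure $\norm{\delta(x_k)}^2$ noise part.

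The crux is to take the conditional expectation given $x_k$ and track the coefficients. The diagonal piece carries coefficient $-\eta + L\eta^2/2$ on $\norm{\nabla f(x_k)}^2$, the cross piece carries $-\eta + L\eta^2$ on $\inner{\nabla f(x_k)}{\E[\delta(x_k)\mid x_k]}$, and the noise piece carries $L\eta^2/2$ on $\E[\norm{\delta(x_k)}^2\mid x_k]$. The main subtlety is the oracle \emph{bias}: in standard unbiased SGD the cross term vanishes because $\E[\delta]=0$, but here it need not. The clean resolution, and the reason the step-size is pinned to $\eta = 1/L$, is that this choice makes the cross-term coefficient $-\eta + L\eta^2 = 0$ cancel identically, so the bias never interacts linearly with the true gradient and instead enters solely through the second-moment bound. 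Invoking $\E[\norm{\delta(x_k)}^2\mid x_k] \leq 2(\sigma_k^2 + B_k^2)$ from~\eqref{eqn:grad_oracle} and simplifying the diagonal coefficient to $-1/(2L)$ then yields the per-step inequality
\[
\E[f(x_{k+1}) \mid x_k] \leq f(x_k) - \frac{1}{2L}\norm{\nabla f(x_k)}^2 + \frac{1}{L}(\sigma_k^2 + B_k^2).
\]

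Finally I would isolate $\norm{\nabla f(x_k)}^2$, take full expectations, and sum over $k = 1,\ldots,K$. The $f$-terms telescope, and bounding $\E[f(x_{K+1})] \geq f^*$ collapses the accumulated function-value difference to $f(x_1) - f^*$, leaving the summed noise $\sum_k(\sigma_k^2 + B_k^2)$. Rescaling by $2L/K$ produces the claimed averaged stationarity bound. No single step is genuinely hard once $\eta = 1/L$ is fixed; the only place demanding care is the coefficient bookkeeping around the cross term, which is exactly where the step-size choice pays off, together with ensuring the conditioning is handled so that $\E[\delta(x_k)\mid x_k]$ and $\E[\norm{\delta(x_k)}^2\mid x_k]$ may be replaced by their oracle bounds \emph{before} taking the outer expectation and telescoping.
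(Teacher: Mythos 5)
Your proof is correct and follows essentially the same route as the paper's: the smoothness (descent) lemma, the decomposition $\grad(x_k) = \nabla f(x_k) + \delta(x_k)$, conditional expectations given $x_k$, and telescoping after bounding $\E[f(x_{K+1})] \geq f^*$. The only difference is bookkeeping: you fix $\eta = 1/L$ up front so that the cross term's coefficient $-\eta + L\eta^2$ vanishes identically (thereby never needing the $G$-Lipschitz bound on $\norm{\nabla f(x_k)}$), whereas the paper keeps $\eta$ general, bounds the cross term by $(\eta - L\eta^2)GB_k$, and that term then vanishes anyway once $\eta = 1/L$ is substituted; note also that both your derivation and the paper's actually produce $\frac{2L}{K}(f(x_1)-f^*) + \frac{2}{K}\sum_{k=1}^K(\sigma_k^2+B_k^2)$, so the factor-of-$L$ mismatch with the theorem's stated constants is inherited from the paper rather than introduced by you.
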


\begin{proof}
    Since $f$ is $L$-smooth, we have
    \begin{align*}
        f(x_{k+1}) &\leq f(x_k) + \inner{\nabla f(x_k)}{x_{k+1} - x_k} + \frac{L}{2} \norm{x_{k+1} - x_k}^2\\
        &= f(x_k) - \eta\inner{\nabla f(x_k)}{\grad(x_k)} + \frac{L\eta^2}{2} \norm{\grad(x_k)}^2\\
        &= f(x_k) - \eta \inner{\nabla f(x_k)}{\delta(x_k) + \nabla f(x_k)} + \frac{L\eta^2}{2} \norm{\delta(x_k) + \nabla f(x_k)}\\
        &= f(x_k) + \norm{\nabla f(x_k)}^2 \left(\frac{L\eta^2}{2} - \eta\right) - (\eta - L\eta^2) \inner{\nabla f(x_k)}{\delta(x_k)} + \frac{L\eta^2}{2} \norm{\delta(x_k)}^2.
    \end{align*}
    Here the first equality follows from our update rule while the remaining simply use the definition of $\delta$ along with algebraic manipulations. Now taking expectations of both sides, we obtain
    \begin{align*}
        \E[f(x_{k+1})] &\leq \E[f(x_k)] + \E[\norm{\nabla f(x_k)}^2] \left(\frac{L\eta^2}{2} - \eta\right) + (\eta - L\eta^2)GB_k + L\eta^2(\sigma_k^2 + B_k^2),
    \end{align*}
    where we have invoked the properties of the oracle to bound the last two terms. Summing over iterations $k = 1,2,\ldots, K$, we obtain

    \begin{align*}
         \E[f(x_{k+1})] \leq f(x_1) + \left(\frac{L\eta^2}{2} - \eta\right)\sum_{k=1}^K \E[\norm{\nabla f(x_k)}^2] + \eta \sum_{k=1}^K(GB_k(1 - L\eta) + L\eta(\sigma_k^2 + B_k^2)).
    \end{align*}
    Rearranging terms, and using that $f(x_{K+1}) \geq f^*$, we obtain

    \begin{align*}
        \frac{1}{K} \sum_{k=1}^K \E[\norm{\nabla f(x_k)}^2] \leq \frac{f(x_1) - f(x^*)}{K(\eta - L\eta^2/2)} + \frac{\eta \sum_{k=1}^K(GB_k(1 - L\eta) + L\eta(\sigma_k^2 + B_k^2))}{K(\eta - L\eta^2/2)}.
    \end{align*}
    Now using the choice $\eta = 1/L$ and simplifying, we obtain the statement of the theorem.
\end{proof}

The theorem tells us that if we pick an iterate uniformly at random from $x_1,\ldots, x_K$, then it is an approximate stationary point in expectation, up to an accuracy which is determined by the bias and variance of the stochastic gradient oracle.

Given this abstract result, we can now prove Theorem~\ref{thm:convergence} by instantiating the errors in the gradient oracle as a function of our assumptions. The proof is similar with convergence proof of stochastic gradient descent from \cite{ghadimi2013stochastic}, while they do not actually cover the case of biased errors in gradients, which is the reason a first-principles proof was included here.

\paragraph{Proof of Theorem~\ref{thm:convergence}}
\label{sec:proof_thm3_appendix}
We now instantiate the result and assumptions for the case of the off-policy policy gradient method. First, note that the algorithm is stochastic gradient ascent for maximizing the expected return $J(\theta) := R^{\pi_\theta}$. Thus we can apply Theorem~\ref{thm:apx_stationary} with $f = -J$, so that $f(x_1) - f^* \leq V_{\max}$ where $V_{\max}$ is an upper bound on the value of any policy in the MDP. $f$ attains a finite minimum value since the expected return has a finite maximum value. We focus on quantifying the bias $B$ in terms of errors in the critic and propensity score computations first. We first introduce some additional notation. Suppose $w_\theta(s)$ and $Q^\theta(s,a)$ are the true propensity (in terms of state distributions, relative to $\mu$) and $Q$-value functions for a policy $\pi_{\theta}$. Let $g_\theta(s,a) = \frac{\partial \log \pi_{\theta}(a | s)}{\partial \theta}$. Suppose we are given estimators $\wh$ and $\Qh$ for $w_\theta$ and $Q^\theta$ respectively. Then our estimated and true off-policy policy gradients can be written as:
\[
    \nabla_\theta J(\theta) = \E_{\nu} [w\rho g_\theta Q^\pi] \quad \mbox{and} \quad \grad(\theta) = \wh\rho g_\theta \Qh.
\]

Now the bias can be bounded as
\begin{align*}
    \norm{\E[\grad(\theta) - \nabla J(\theta) | \theta]} &= \norm{\E_\nu[ w\rho g_\theta Q^\theta - \wh\rho g_\theta \Qh]}\\
    &\leq \norm{\E_\nu[(w - \wh)\rho g_\theta Q^\theta]} + \norm{\E_\nu[\wh \rho g_\theta (Q^\theta - \Qh)]} \\
    &\leq \E_\nu[\norm{(w - \wh)\rho g_\theta Q^\theta}] + \E_\nu[\norm{\wh \rho g_\theta (Q^\theta - \Qh)}] \\
    &\leq \E_\nu[ |w - \wh| \norm{\rho g_\theta} |Q^\theta|] + \E_\nu[|\wh|  \norm{\rho g_\theta} |Q^\theta - \Qh|]
\end{align*}

By Assumption \ref{assum:convergence} we have that
\begin{align*}
    \norm{\rho(s,a) g_\theta(s,a)} = \norm{\frac{1}{\nu(a|s)}\frac{\partial \pi_{\theta}(a|s)}{\partial \theta}} \le \frac{\gmax}{\numin},
\end{align*}
Then the bound on the bias simplifies to
\begin{align*}
\norm{\E[\grad(\theta) - \nabla J(\theta) | \theta]} &\leq \frac{\gmax}{\numin} \E_\nu[ |w - \wh| |Q^\theta|] + \frac{\gmax}{\numin} \E_\nu[|\wh| |Q^\theta - \Qh|]
\end{align*}

How we simplify further depends on the assumptions we make on the errors in $\wh$ and $\Qh$. As a natural assumption, suppose that the relative errors are bounded in MSE, that is $\E_{\nu}(w(s) - \wh(s))^2 \leq \eps^2_w$ and $\E_{\nu}\left(Q^\theta(s,a) - \Qh(s,a)\right)^2 \leq \eps^2_Q$. Then by Cauchy-Shwartz inequality, we can simplify the above bias term as
\begin{align*}
\norm{\E[\grad(\theta) - \nabla J(\theta) | \theta]} &\leq \frac{\gmax}{\numin} \eps_w \sqrt{\E_\nu [(Q^\theta)^2]}
+ \frac{\gmax}{\numin} \eps_Q \sqrt{\E_\nu [\wh^2]}
\end{align*}

By Assumption \ref{assum:convergence} we have  $Q^{\theta} \leq \vmax$ for all $s,a$, and
\begin{align*}
    \E_\nu [\wh^2] &\le \E_\nu [ w(s)^2 + (w(s) - \wh(s))^2 ] \le \E_\nu [ w(s)^2] + \E_\nu [ (w(s) - \wh(s))^2 ] \le \wmax^2 + \eps_w^2
\end{align*}

Then the bound on the bias further simplifies to
\begin{align*}
\norm{\E[\grad(\theta) - \nabla J(\theta) | \theta]} &\leq \eps_w \gmax \vmax /\numin + \eps_Q \gmax \sqrt{\wmax^2 + \eps_w^2} /\numin
\end{align*}

Similarly, for the variance we have
\begin{align*}
\E[\norm{\grad(\theta) - \nabla J(\theta)}^2 | \theta] &\leq 2 \E_\nu\left[\norm{(\wh - w) \rho g_\theta Q^\theta}^2\right] + 2\E_{\nu}\left[\norm{\wh \rho g_\theta (\Qh - Q^\theta)}^2\right]\\
&\leq 2\eps_w^2 \gmax^2\vmax^2/\numin^2 + 2\eps_Q^2 (\wmax^2 + \eps_w^2)\gmax^2/\numin^2.
\end{align*}

Hence, the RHS of Theorem~\ref{thm:apx_stationary} simplifies to
\[
\frac{2\vmax}{K} + O\left(\frac{\sum_{k=1}^T \eps_{w,k}^2 \gmax^2\vmax^2/\numin^2 + \eps_{Q,k}^2 (\wmax^2 + \eps_{w,k}^2)\gmax^2/\numin^2}{K}\right),
\]
where $\eps_{w,k}$ and $\eps_{Q,k}$ are the error parameters in the propensity scores and critic at the $k_{th}$ iteration of our algorithm. Since we update these quantities online along with the policy parameters, we expect $\eps_{w,k}$ and $\eps_{Q,k}$ to decrease as $k$ increases.
That is, assuming that $\nu$ satisfies the coverage assumptions with finite upper bounds on the propensities and the policy class is Lipschitz continuous in its parameters, the scheme converges to an approximate stationary point given estimators $\wh$ and $\Qh$ with a small average MSE across the iterations under $\nu$.

\section{Details for Hard Example of \offpac}
\label{sec:hard_example}
Consider the problem instance shown in Figure \ref{fig:hard_example}, and the policy class describe in the paper. The true state value function of $\pi_\alpha$, $V_{\pi_\alpha}$ satisfies that:
$
    V^{\pi_\alpha}(s_0) = V^{\pi_\alpha}(s_1) = \tfrac{1+\alpha}{2}, ~~V^{\pi_\alpha}(s_2) = \tfrac{1-\alpha}{2}, ~~ V^{\pi_\alpha}(s_3) = 1,~~V^{\pi_\alpha}(s_4) = 0.
$
We now study the Off-PAC gradient estimator $\goffpac(\alpha)$ in an idealized setting where the critic $Q^{\pi_\alpha}$ is perfectly known. As per Equation 5 of~\citet{degris2012off}, we have
\begin{align*}
    \goffpac(\alpha) &= \sum_s d^{\pi_b}(s) \sum_a \frac{\partial \pi_\alpha(a|s)}{\partial \alpha} Q^{\pi_\alpha}(s,a)\\
    &= d^{\pi_b}(s_1) \bigg(Q^{\pi_\alpha}(s_1,\ell) - Q^{\pi_\alpha}(s_1,r)\bigg) \\
    & + d^{\pi_b}(s_2) \bigg(Q^{\pi_\alpha}(s_2,\ell) - Q^{\pi_\alpha}(s_2,r)\bigg) \\&= \tfrac{1}{2}(1-1/2) + \tfrac{1}{2}(0-1/2) = 0.
\end{align*}
That is, the gradient vanishes for any policy $\pi_\alpha$, meaning that the algorithm can be arbitrarily sub-optimal at any point during policy optimization. We note that this does not contradict the previous Off-PAC theorems as the policy class is not fully expressive in our example, a requirement for their convergence results. 

\begin{figure}[ht]
\centering
  \includegraphics[height=1.5in]{hard_instance.jpg}
\end{figure}

\section{Details for Experiments}
\label{sec:exp_detail}
In this section we will show some important details and hyper-parameter settings of our algorithm in experiments. 

\subsection{Simulation Domains} For the first domain \emph{cart pole} control problem, the state space is continuous and describes the position and velocity of cart and pole. The action space consists of applying a unit force to two directions. The horizon is fixed to 200. If the trajectory ends in less than 200 steps, we pad the episode by continuing to sample actions and repeating the last state. We use a uniformly random policy to collect $n=500$ trajectories as off-policy data. We use neural networks with a 32-unit hidden layer to fit the stationary distribution ratio, actor and critic.

The second domain is an \emph{HIV treatment simulation} described in \cite{ernst2006clinical}. 
The transition dynamics are modeled by an ODE system in \citet{ernst2006clinical}. The reward consists of a small negative reward for deploying each type of drug, and a positive reward based on the HIV-specific cytotoxic T-cells which will increase with a proper treatment schedule. To maximize the total reward in this simulator, algorithms need to do structured treatment interruption (STI), which aim to achieve a balance between treatment and the adverse effect of abusing drugs. 
Each trajectory simulates a treatment period for one patient in 1000 days and each step corresponds to a 5-day interval in the ODE system. We represent the state by taking logarithm of state features and divide the reward by $10^8$ to ensure they are in a reasonable range to fit the neural network models. 
We use neural networks with three 16-unit hidden layers to fit the actor and state distribution ratio, and a neural network with four 32-unit hidden layers for the critic.

\subsection{Hyper-parameters} We use three separate neural networks, one for each of actor, critic, and the state distribution ratio model $w$. We use the Adam optimizer for all of them. We also use a entropy regularization for the actor. We warm start the actor by maximizing the log-likelihood of actor on the collected dataset. For critic, we use the same critic algorithm as we used in Algorithm \ref{alg:opposd} except that there is no importance sampling ratio (as it is on-policy for the warm start). For the warm start of w, we just fit the w for several iterations using the warm start policy found for the actor. Warm start uses the same learning rates as normal training. For critic and $w$, we also keep the state of optimizer to be the same when we start normal training.

\begin{table}[ht!]
\begin{center}
\begin{tabular}{c|cc}
 Hyper-parameters  & cart pole  & HIV  \\
 \hline
 $\gamma$ & 1.0 &0.98 \\
 $\lambda$ & 0. & 0. \\
 entropy coefficient & 0.01 & 0.03 \\
 learning rate (actor)  & 1e-3  & 5e-6   \\
 learning rate (critic)  & 1e-3  & 1e-3   \\
 learning rate (w) & 1e-3 & 3e-4 \\
 batch size (actor)  & 5000 & 5000  \\
 batch size (critic)  & 5000 &  5000 \\
 batch size (w)  & 200 & 200  \\
 number of iterations (critic)  & 10 & 10  \\
 number of iterations (w)  & 50 & 50 \\
 weight decay (w) & 1e-5 & 1e-5 \\
 behavior cloning number of iterations & 2000 & 2000 \\
 warm start number of iterations (crtic)  & 500 & 2500 \\
 warm start number of iterations (w)  & 500 & 2500 \\
\end{tabular}
\caption{Hyper-parameters in experiments}
\label{tab:hyperparameters}
\end{center}
\end{table}

In the Table \ref{tab:hyperparameters} we show some hyper-parameters setting we used in both domain. We also follow the details in Algorithm 1 and Algorithm 2 of \citet{liu2018breaking} to learn $w$. We scale the inputs to $w$ so that the whole off-policy dataset has zero mean and standard deviation of 1 along each dimension in state space. We use the RBF kernel to compute the loss function for $w$. For the cart pole simulator, the kernel bandwidth is set to be the median of state distance. If computing this median state distance over the whole off-policy dataset is computationally too expensive, it can be approximated using a mini-batch. In the HIV domain the bandwidth is set to be 1. When we compute the loss of $w$, we need to sample two mini-batch independently to get an unbiased estimates of the quadratic loss. The loss in each pair of mini-batch is normalized by the sum of kernel matrix elements computed from them.

\subsection{Choice of Algorithm with Discounted Reward}
In discounted reward settings, the state distribution is also defined with respect to the discount factor $\gamma$, and \citet{liu2018breaking} introduce an algorithm to learn state distribution ratio in this setting. However, we notice that in on-policy policy learning cases, though the policy gradient theorem \citep{sutton2000policy} requires samples from the stationary state distribution defined using the discount factor, it is common to directly use the collected samples to compute policy gradient without re-sampling/re-weighting $(s,a,r,s')$'s according to the discounted stationary distribution. This might be driven by sample efficiency concerns, as  samples at later time-steps in the discounted stationary distribution will receive exponentially small probability, meaning they are not leveraged as effectively by the algorithm. Given this, we compare three different variants of our algorithm in HIV experiment with discounted reward. The first (OPPOSD average) variant uses the algorithm for the average reward setting, but evaluates its discounted reward. The second learns the state distribution ratio $w(s)$ in the discounted case (Algorithm 2 in \citep{liu2018breaking}), but still samples from the undiscounted distribution to compute the gradient (OPPOSD disc $w$). The third learns the state distribution ratio $w(s)$ in the discounted case and also re-samples the samples according to $
    d^{\pi}(s) = \displaystyle\lim_{T\to\infty}\textstyle\frac{1}{\sum_{t=0}^T \gamma^t} \sum_{t=0}^{T} \gamma^t d^{\pi}_{t}(s)
$ (OPPOSD). In the main body of paper, we select the third one as it is the most natural way from the definition of problem and policy gradient theorem. Results of these three methods are demonstrated in Figure \ref{fig:hiv_all} and they do not have significant differences in this experiment.
\begin{figure}[ht]
    \centering
    \includegraphics[height=2in]{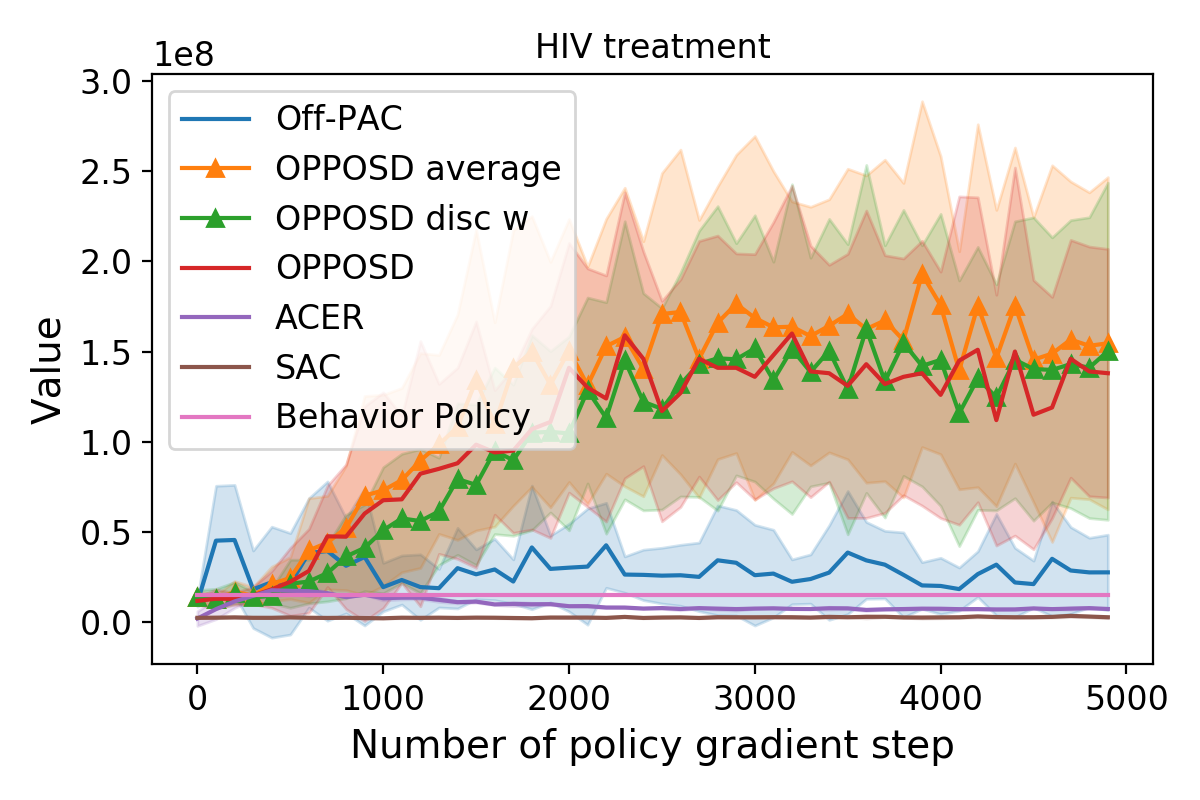} 
    \caption{Episodic scores over length 200 episodes in HIV treatment simulator.}
    \label{fig:hiv_all}
\end{figure}

\section{Discussion on Related Off-Policy Learning Algorithms}
\label{sec:appendix_related_work}

There are also many different algorithms which have been built using \offpac \citep{degris2012off} and improve \offpac in different directions, such as DDPG \citep{lillicrap2015continuous}, ACER \citep{wang2016sample}, etc. They are orthogonal to our work and our state distribution correction techniques are composable with these further improvements in the \offpac framework. For understanding the impact of correcting the stationary distribution, in the experiment section of this work we therefore focus on ablation comparison with \offpac. It would be interesting to combine our work with the additional contributions of DDPG, ACER etc. to derive improved variants of each of those algorithms. 

We also wish to clarify that some previous off-policy policy gradient algorithms such as DDPG, SAC \citep{Haarnoja2018Soft} and ACER focus on a different setting with this paper -- they consider online off policy where data is collected every iteration using the current policy with potentially noise, and the off policy nature comes from when updating to a new policy, the algorithms use all the data collected across previous iterations. In contrast our focus and experiments are on batch off policy learning, where data is collected in advance from a behavior policy and a new policy is computed using only that batch dataset.  The difference between the online off-policy and batch off-policy settings is important since algorithms that receive periodic access to new samples gathered using the current policy may benefit significantly. To illustrate the difference in the two settings, we ran SAC and related ACER in our batch off-policy setting in the experimental section. SAC is proposed for continuous actions setting in their paper and available code, and we re-derive the policy gradient updates for discrete actions. 

We separate out discussion of SBEED \citep{Dai2018SBEED} since its theoretical results are derived for a similar batch off policy RL setting as our approach. However, it has not been experimentally evaluated in a batch off-policy setting (their empirical results were for off policy RL as described above where more data is collected). SBEED advances over a number of prior theoretical results on sample complexity for value function approaches for batch off policy learning \citep{Antos2008Learning}. In contrast, in policy gradient methods there has not even existed a statistically consistent procedure for batch off-policy learning, and this is the fundamental contribution of our work. In many domains policy optimization techniques are the methods of choice, and policy improvement from a reasonable policy is often more natural in safety critical applications. Since these are the types of applications where off-policy carries the most appeal, we focus on the class of policy optimization algorithms that can work in a batch off-policy setting. We can possibly leverage similar techniques in value function learning too, and we view that development along with a detailed evaluation against other value function learning methods as future work.

\end{document}